\newtheorem{theorem}{Theorem}
\newtheorem{defn}{Definition}
\newtheorem{assumption}{Assumption}
\newtheorem{lemma}[theorem]{Lemma}
\newtheorem{remark}{Remark}
\def\1{\bm{1}}
\newcommand{\valid}{\textbf{$\boldsymbol{\top}$}}
\newcommand{\invalid}{\textbf{$\boldsymbol{\perp}$}}
\xdef\csname vec\x \endcsname{\noexpand\ensuremath{\noexpand\bm{\x}}}
\xdef\csname vec\x \endcsname{\noexpand\ensuremath{\noexpand\bm{\x}}}
\xdef\csname c\x \endcsname{\noexpand\ensuremath{\noexpand\mathcal{\x}}}
\xdef\csname bb\x \endcsname{\noexpand\ensuremath{\noexpand\mathbb{\x}}}
\def\vecGamma{{\bm{\Gamma}}}
\def\vs{{\bm{s}}}
\def\vx{{\bm{x}}}
\def\vy{{\bm{y}}}
\def\mD{{\bm{D}}}
\def\mG{{\bm{G}}}
\def\mS{{\bm{S}}}
\def\mW{{\bm{W}}}
\def\mX{{\bm{X}}}
\def\mY{{\bm{Y}}}
\def\mZ{{\bm{Z}}}
\DeclareMathAlphabet{\mathsfit}{\encodingdefault}{\sfdefault}{m}{sl}
\SetMathAlphabet{\mathsfit}{bold}{\encodingdefault}{\sfdefault}{bx}{n}
\newcommand{\tens}[1]{\bm{\mathsfit{#1}}}
\def\tD{{\tens{D}}}
\newcommand{\ltwo}[1]{\left\lVert #1\right\rVert_2}
\DeclareMathOperator*{\argmin}{arg\,min}
\newcommand{\ie}{\emph{i.e.}}
\newcommand{\subparagraph}{}
\definecolor{pastelRed}{RGB}{255,105,97}
\definecolor{pastelGreen}{RGB}{119,221,119}
\definecolor{pastelBlue}{RGB}{79,195,247}
\definecolor{pastelOrange}{RGB}{255,179,71}
\definecolor{darkred}{RGB}{175, 0, 0}
\definecolor{darkgreen}{RGB}{0, 128, 0}
\definecolor{darkblue}{RGB}{0, 0, 175}
\definecolor{darkorange}{RGB}{255, 128, 0}
\newcommand{\protocol}{~{\scshape Valid}~}
\begin{document}

\title{\protocol: a Validated Algorithm for Learning in Decentralized Networks with Possible Adversarial Presence}
\author{\IEEEauthorblockN{Mayank Bakshi\IEEEauthorrefmark{1}\thanks{This material is based upon work supported by the National Science Foundation under Grant No. CCF-1908725, CCF-2107526, CCF-2107370, and CCF-2107488. Author emails: mayank.bakshi@ieee.org, sg273@njit.edu, yauhen.yakimenka@njit.edu, beemera@uwec.edu, okosut@asu.edu, jkliewer@njit.edu.}\ \  Sara Ghasvarianjahromi\IEEEauthorrefmark{2}\ \ Yauhen Yakimenka\IEEEauthorrefmark{2}\ \ Allison Beemer\IEEEauthorrefmark{3}\ \  Oliver Kosut\IEEEauthorrefmark{1}\ \  J\"org Kliewer\IEEEauthorrefmark{2}} 
\IEEEauthorblockA{\IEEEauthorrefmark{1}Arizona State University\qquad   \IEEEauthorrefmark{2}New Jersey Institute of Technology\qquad   \IEEEauthorrefmark{3}University of Wisconsin-Eau Claire}}


\date{}
\maketitle
\vspace{-0.25em}
\begin{abstract}
We introduce the paradigm of \emph{validated decentralized learning} for undirected networks with heterogeneous data and possible adversarial infiltration.  We require \emph{(a)} convergence to a global empirical loss minimizer when adversaries are absent, and \emph{(b)} either detection of adversarial presence or convergence to an admissible consensus model in their presence. This contrasts sharply with the traditional byzantine-robustness requirement of convergence to an admissible consensus irrespective of the adversarial configuration. To this end, we propose the \protocol protocol which, to the best of our knowledge, is the first to achieve a validated learning guarantee. Moreover, \protocol offers an $O(1/T)$ convergence rate (under pertinent regularity assumptions), and computational and communication complexities comparable to non-adversarial distributed stochastic gradient descent. Remarkably, \protocol  retains optimal performance metrics in adversary-free environments, sidestepping the robustness penalties observed in prior byzantine-robust methods. {A distinctive aspect of our study is a heterogeneity metric based on the norms of individual agents' gradients  computed at the global empirical loss minimizer. This not only provides a natural statistic for detecting significant byzantine disruptions but also allows us to prove the optimality of \protocol in wide generality.  
 Lastly, our numerical results reveal that,  in the absence of  adversaries, \protocol converges faster than state-of-the-art byzantine robust algorithms, while when adversaries are present, \protocol terminates with each honest agent either converging to  an admissible consensus or declaring adversarial presence in the network.}
\end{abstract}

\section{Introduction}\vspace{-0.25em}
Machine learning is increasingly reliant on data from a variety of distributed sources. As such, it may be difficult to ensure that the data which originates from these sources is trustworthy. Thus, there is a need to develop distributed and decentralized learning strategies that can respond to bad or even malicious data. However, worst-case or \emph{Byzantine} resilience is an extremely strong requirement, that performance be maintained if a malicious adversary controls a subset of the processing nodes and takes any conceivable action. In practice, an adversary launching such an attack against a learning process requires tremendous resources which may not be worth the cost to influence the learned model. Thus, even though malicious adversaries are a threat, for the vast majority of the time, they are not present. An algorithm that maintains Byzantine robustness necessarily sacrifices performance when no adversaries are present. Instead, we seek a middle ground in which there is no performance loss at all when adversaries are not present, but they can still be detected if they are.

We consider decentralized learning in a network of agents --- in contrast to federated learning~\cite{McMahMRHA:17,KonecMYRSB:16}, there is no trusted central processing entity. Each agent in the network may communicate with its neighbors in a topology graph, and it has access to data samples from a certain distribution. The goal is to learn a consensus model at each agent that minimizes a total loss function from combining all agents' individual data. Rather than Byzantine robustness, we require the weaker assumption of \emph{validation}. In particular, when no adversaries are present, there should be no performance degradation compared to the completely honest setting. When one or more adversaries are present, one of two outcomes may occur for each honest (non-adversarial) agent: either (i) the model is learned with minimal deviation from the completely honest setting, or (ii) the agent signals an alarm, indicating that it has detected the presence of the adversary. In the latter case, the honest agents signaling the alarm do not converge on a model; however, once they signal the alarm, they may fall back to a more conservative Byzantine resilient algorithm, or even abandon the learning task in favor of ridding the network of adversaries.

\noindent{\em Outline of the paper:} We formally introduce our problem setting in Section~\ref{sec:prelims}. Next, in Sections~\ref{sec:mainresults} and~\ref{sec:validprotocol}, we describe our main results and give an overview of the~\protocol protocol. 
Finally, in Section~\ref{sec:experiments}, we give experimental results to verify the performance of~\protocol on practical datasets and compare it against relevant benchmarks. In the interest of space, we present the detailed descriptions of our protocols and the proof arguments in the appendices.  We begin with a review of related work below.

\noindent{\em Related work:} 
 Early works such as the Byzantine Generals Problem~\cite{PeaseSL:80,LampoSP:82a,Dolev:82,DolevS:83} laid the groundwork for Byzantine fault tolerance. \cite{DolevRS:90} introduced 'early stopping' algorithms allowing adaptivity to the actual number of Byzantine faults. Along this direction, \cite{CastrL:99,AbrahD:15,MartiA:05} study algorithms that optimize performance in fault-free scenarios while retaining worst-case robustness. Cachin \emph{et al.}'s work~\cite{CachiKPS:01a} on broadcast primitives for asynchronous Byzantine consensus is particularly relevant to our validation focus.

Work in decentralized learning may be traced back to distributed optimization methods 
(\emph{c.f.},~\cite{TsitsBA:86,NedicO:09}). Subsequent work has examined both deterministic and stochastic optimization techniques in this context~\cite{RamNV:09,NedicOP:10,LianZZHZL:17}. See~\cite{NedicPSS:18,YangYWYWMHWLJ:19} for a comprehensive survey of various methods in this area. 

Byzantine-resilience in distributed optimization setups was first examined in the context of distributed estimation~\cite{VempaTV:13,KailkHBV:15,LaiRV:16,ChenKM:18} and Federated Learning~\cite{BlancEGS:17,AlistAL:18,YinCKB:18,XieKG:19,XieKG:20}. The issue of Byzantine adversaries on distributed optimization was first examined by Su~\emph{et al.}~\cite{SuV:15} and has led to a substantial area of research,~\emph{c.f.}~\cite{SuV:15a,SuV:15b,XuLL:18,MhamdGR:18,SundaG:19,KuwarXS:20,SuV:21,YeminNGG:22}.  We refer the reader to~\cite{YangGB:20} for a comprehensive survey of these areas. In the context of decentralized learning, two broad approaches have emerged -- \emph{screening-based protocols} that involve outlier-robust aggregation of other agents' models by each agent~\cite{El-MhFGGHR:2021,YangB:19,FangYB:22,HouWWHHG:22} and \emph{performance-based protocols} that involve using each agent's local data to detect and eliminate outliers~\cite{GuoZYXMXL:22,ElkorPA:22}. Our validation requirement is similar to that in~\cite{ChenKM:18} that examines the problem of distributed estimation with an option to raise a ``flag'' if adversaries are detected.

\section{Preliminaries}\label{sec:prelims}

\subsection{Learning Problem} \vspace{-0.25em} We consider the problem of distributed empirical risk minimization across a set of agents $\cV$. Each agent \(v\in\cV\) observes data sequentially, drawn independently and identically according to its \emph{local distribution} \(P_v\in\cP(\cD)\), where the class of data distributions $\cP(\cD)$ under consideration is known {\em a priori}. We employ a loss function \(f: \mathbb{R}^d \times \mathcal{D} \to \mathbb{R}^+\) that evaluates the goodness-of-fit between the model vector \(\vecx \in \mathbb{R}^d\) and the data \(D\in\cD\) with smaller loss values indicating a better fit. 

For any subset \(\mathcal{U}=\{u_1,u_2,\ldots,u_m\}\) of \(\mathcal{V}\), the joint data distribution is a product distribution \(P_{\mathcal{U}} = P_{u_1} \times P_{u_2} \times \ldots \times P_{u_m}\in\left(\cP(\cD)\right)^{|\cU|}\). The joint distribution of all agents' data \(P_{\mathcal{V}}\) is termed the \emph{global data distribution}. 
The ideal learning goal\footnote{We refer to this as the ``ideal'' learning goal as finding $\vecx^*(P_{\cV})$ (and hence, achieving $f^*(P_{\cV})$) is, in general, too optimistic in the presence of adversaries. We elaborate on this further in Section~\ref{sec:admissiblemodel}} is to find a global empirical loss minimizer
\begin{equation}
	\vecx^*(P_{\mathcal{V}}) = \argmin_{\vecx \in \mathbb{R}^d} \frac{1}{|\mathcal{V}|} \sum_{v \in \mathcal{V}} \mathbb{E}_{D \sim P_v} f(\vecx, D),\label{eq:xstar}
\end{equation}
that attains the loss value
	$f^*(P_{\mathcal{V}}) = \frac{1}{|\mathcal{V}|} \sum_{v \in \mathcal{V}} \mathbb{E}_{D \sim P_v} f(\vecx^*(P_{\mathcal{V}}), D)$.

\subsection{Network, Protocols, and Adversaries}\vspace{-0.25em}
\begin{figure}[t]
\centering
 
        \resizebox{0.5\textwidth}{!}{\begin{tikzpicture}[scale=1.5,>=stealth]
            \node[circle, draw, darkgreen, fill=white!90!green, minimum size=0.5cm] (v) at (0,0) {$v$};
            \node[circle, draw=gray, fill=white, minimum size=0.5cm] (a) at (1.5,-0.5) {\color{gray}$a$};
            \node[circle, draw=gray, fill=white, minimum size=0.5cm] (b) at (-0.7,1.2) {\color{gray}$b$};
            \node[circle, draw=red, fill=red!20!white, minimum size=0.5cm] (c) at (-1,-1) {$c$};
           
 			\node[rectangle, draw, black!50!darkgreen, dotted, fill=white!99!green, minimum size=0.5cm,align=left] (vin) at (3,1) { \small \color{black!50!darkgreen}  Sent messages: $(\vecM_{vu}^{(\tau)}:u\in\cN(v),\tau\in[t-1])$\\ Recvd. messages: $(\vecM_{uv}^{(\tau)}:u\in\cN(v),\tau\in[t-1])$\\Data mini-batches: $(\mD^{(\tau)}_v:\tau\in[t])$\\Graph $\cG$, protocol $\Pi$};
           \node[rectangle, draw=red, thick, dotted, fill=red!2!white, minimum size=0.5cm,align=left] (cin) at (2,-1.7) { \small All messages: $(\vecM_{uw}^{(\tau)}:(u,w)\in\cE,\tau\in[t-1])$\\ All data mini-batches: $(\mD^{(\tau)}_u:u\in\cV,\tau\in[t])$\\ Graph $\cG$, protocol $\Pi$, global data distribution $P_{\cV}$};
            \draw[->, gray, dashed] (a) -- ++(0.5,-0.2);
            \draw[->, gray, dashed] (b) -- ++(-0.2,0.5);
            \draw[->, dashed,red ] (c) -- ++(-0.2,-0.5);
    
            \draw[<-,gray,  dashed] (a) -- ++(0.5,0.2);
            \draw[<-,gray,  dashed] (b) -- ++(-0.5,0.2);
            \draw[<-,gray,  dashed] (c) -- ++(-0.5,-0.2);
    
            
            \draw[->, thick, black,dotted,darkgreen] (vin) to[bend right=10] node[font=\small] {} (v);
            	\draw[->, thick,red,dotted] (cin.north west) to[bend right=7] node[ above, sloped, font=\small] {} (c);
            \draw[->, thick, black] (v) to[bend left=15] node[midway, above, sloped, font=\small] {$\vecM_{va}^{(t)}$} (a);
            
            \draw[->, thick, black] (a) to[bend left=15] node[midway, below, sloped, font=\small] {$\vecM_{av}^{(t-1)}$} (v);
            
            \draw[->, thick, black] (v) to[bend left=15] node[midway, below, sloped, font=\small] {$\vecM_{vb}^{(t)}$} (b);
            \draw[->, thick, black] (b) to[bend left=15] node[midway, above, sloped, font=\small] {$\vecM_{bv}^{(t-1)}$} (v);

            \draw[->, thick, black] (v) to[bend left=15] node[midway, below, sloped, font=\small] {$\vecM_{vc}^{(t)}$} (c);
            \draw[->, thick, red!90!white] (c) to[bend left=15] node[midway, above, sloped, font=\small] {$\vecM_{cv}^{(t-1)}$} (v);

        \end{tikzpicture}}
        \caption{This figure depicts the $t$-th round of a validated learning protocol (for $t=1,2,\ldots, T$). Agent $v$ is an honest agent and agent $c$ is a Byzantine agent. The rectangles next to agents $v$ and agent $c$ denote the available information to them at the beginning of the $t$-th round.}\label{fig:problem}
        \end{figure}
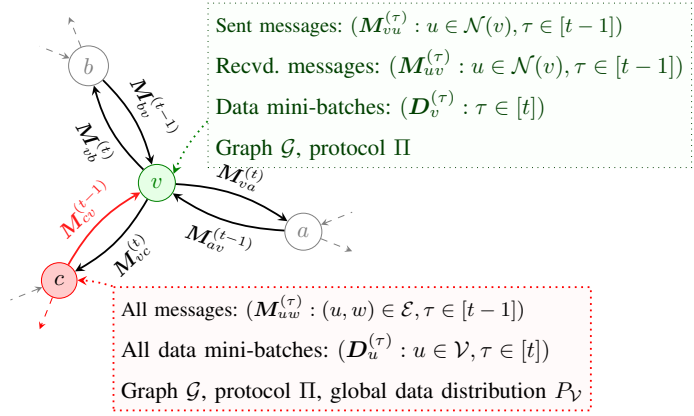
        \begin{figure}[t]
        	\centering
 
        \resizebox{0.6\textwidth}{!}{\begin{tikzpicture}[scale=1.5,>=stealth]
            \node[circle, draw, fill=white, minimum size=0.5cm] (v) at (0,0) {$v$};
            \node[dotted,rectangle,draw,fill=white, minimum size=0.5cm,align=left] (vinput) at (-1.9,0) {\small $(\vecM_{vu}^{(\tau)}:u\in\cN(v),\tau\in[T])$\\$(\vecM_{uv}^{(\tau)}:u\in\cN(v),\tau\in[T])$\\$(\mD^{(\tau)}_v:\tau\in[T])$};
            	\coordinate (inter) at (0.5,0) {};
            	\node (out1) at (1.4,0.5) {};
            	\node (out2) at (1.4,-0.5) {};
            	\node (or) at (2.1,0) {OR};

            	\node[dotted,rectangle,fill=white, minimum size=0.5cm,align=center] (voutput1) at (2.9,0.5) {\small $S_v=\valid$ and final model $\hat{\vecx}_v\in\bbR^d$};
          \node[dotted,rectangle,fill=white, minimum size=0.5cm,align=center] (voutput2) at (1.8,-0.5) {\small $S_v=\invalid$};
           \draw[->, dotted] (vinput) -- (v);
           \draw[->, thick,dashed] (inter) to node[ above, sloped, font=\small] {Validate} (out1);
           \draw[->, thick,dashed] (inter) to node[ below, sloped, font=\small] {Invalidate} (out2);
			\draw[-, thick,dashed] (v) -- (inter);

      \end{tikzpicture}}
        \caption{At the conclusion of the $T$-th round, agent $v$ either sets its validation state $S_v$ to \valid\ to indicate a valid consensus or to \invalid\ to declare Byzantine presence.}\label{fig:goal}
        \end{figure}
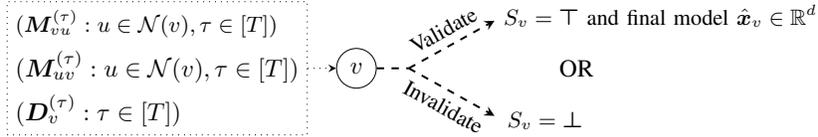

\subsubsection{Agent Graph} Agents are interconnected via an undirected graph \(\mathcal{G} = (\mathcal{V}, \mathcal{E})\), where \(\mathcal{E}\) denotes the set of edges that specify which pairs of agents can exchange information noiselessly in each round. An agent \(v\) is considered a neighbor of agent \(u\), denoted as \(v \in \mathcal{N}(u)\), if and only if \((v, u) \in \mathcal{E}\).
\subsubsection{Validated Learning Protocol} A Validated Learning Protocol \(\Pi\) is  a decentralized protocol that operates over a fixed number of rounds, say \(T\). As depicted in Figure~\ref{fig:problem}, in each round \(t \in [T]\), each agent \(v\) observes a mini-batch of data \(\vecD_v^{(t)}=(D_{v,1}^{(t)}, D_{v,2}^{(t)}, \ldots, D_{v,K}^{(t)})\), where \(D_{v,k}^{(t)} \in \mathcal{D}\) for all \(k \in [K]\). Subsequently, each agent \(v\) exchanges messages with its neighbors based on the mini-batch $\vecD_v^{(t)}$ and accumulated information from previous rounds. At the conclusion of round \(T\), agents independently compute their \emph{validation state} $S_v$ to be either {``\valid''} or {``\invalid''}. If an agent $v$ computes its final {validation state} to be \valid, then it further outputs its final model vector $\hat{\vecx}_v$. We describe the desired outcome of a validated learning protocol in Section~\ref{sec:validatedlearninggoal}. 
\subsubsection{Adversaries} An unknown subset \(\mathcal{B}\) of agents, termed \emph{Byzantine agents},  may exhibit adversarial behavior. We assume that Byzantine agents have a \emph{causal} global knowledge, \ie, at the outset of the $t$-th round, such agents know all other agents' sent transmissions and observed local data upto (and including) the $t$-th round. Additionally, Byzantine agents have full knowledge of the graph $(\cV,\cE)$, the global data distribution $P_{\cV}$, and the protocol $\Pi$. Armed with this knowledge, each Byzantine agent may produce arbitrary outputs (\ie, they may deviate arbitrarily from $\Pi$) when exchanging messages with their neighbors. Agents in the subset $\cH\triangleq \cV\setminus\cB$ are referred to as \emph{honest agents}. Honest agents are unaware of the presence or identity of Byzantine agents. 
\vspace{-0.25em}
\subsection{Admissible Consensus Models}\vspace{-0.25em}\label{sec:admissiblemodel}
 While the local distributions are not known \emph{a priori}, we assume that it is common knowledge among the agents that \(P_{\mathcal{V}}\) satisfies bounded heterogeneity. In this paper, the specific constraint defined below is of particular interest.

\begin{defn}[\(\delta\)-Heterogeneous Distributions]
We say that the global data distribution \(P_{\mathcal{V}}\in\left(\cP(\cD)\right)^{|\cV|}\) is \(\delta\)-heterogeneous with respect to the loss function $f$ if
\[
\frac{1}{|\mathcal{V}|} \sum_{v \in \mathcal{V}} \ltwo{\mathbb{E}_{D_v \sim P_v} \left[ \nabla_{\vecx} f(\vecx^*(P_{\mathcal{V}}), D_v) \right]}^2 \leq \delta,
\]
where \(\vecx^*(P_{\mathcal{V}})\) is as specified in \eqref{eq:xstar}. We denote the set of  all \(\delta\)-heterogeneous distributions in \(\left(\cP(\cD)\right)^{|\cV|}\) as \(\mathcal{P}_{\delta}\).
\end{defn}


\begin{remark}\label{rem:benignattack}
In the presence of Byzantine agents and heterogeneity, converging to $\vecx^*(P_{\cV})$ is generally infeasible even if honest agents are able to detect if other agents deviate from honest behavior. To see this, consider a "benign" attack strategy orchestrated by Byzantine agents. Armed with the knowledge of \(P_{\cV}\) and $\delta$, an adversary commandeers a subset \(\cB\) of nodes and prescribes specific data distributions \(Q_v\) for each \(v \in \cB\), while ensuring \(P_{\cV \setminus \cB} \times Q_{\cB} \in \cP_{\delta}\). Each Byzantine agent \(v\) then simulates the actions of an honest agent, drawing data from \(Q_v\) instead of \(P_v\). This renders the situation indistinguishable from a case where all agents, including those in \(\cB\), are honest and the actual global data distribution is \(P_{\cV \setminus \cB} \times Q_{\cB}\). Note that $f^*(P_{\cH}\times Q_{\cB})$ is an upper bound on the true global loss value $f^*(P_{\cV})$.  \end{remark}\vspace{-0.25em}

Remark~\ref{rem:benignattack} underscores the need for including adversarial choices in attainable consensus models. We introduce the notion of \emph{admissible consensus models} that capture this notion.\vspace{-0.25em}
\begin{defn}[\((\cU, P_{\cV}, \delta)\)-Admissible Consensus Models]\label{def:admissible}
Given a set of agents \(\cU\), a global data distribution $P_{\cV}$, and a $\delta>0$, a model vector \(\hat{\vecx}\in\bbR^{d}\) is considered  an \((\cU, P_{\cV}, \delta)\)-admissible consensus model if there exist vectors \((\hat{\vecg}_v:v\in\cV\setminus\cU)\)\ such that:\\
\noindent{ a)} $\sum_{u \in \cU} \mathbb{E}_{D \sim P_u} \nabla_{\vecx}f(\hat{\vecx}, D) + \sum_{v \in \cV\setminus\cU} \hat{\vecg}_v=  0$ and \\
\noindent{ b)}$\frac{1}{|\cV|}\left[\sum_{u \in \cU} \ltwo{\mathbb{E}_{D \sim P_u} \nabla_{\vecx}f(\hat{\vecx}, D)}^2 + \sum_{v \in \cV\setminus\cU} \ltwo{\hat{\vecg}_v}^2\right]\leq \delta$.\vspace{0.25em}
We denote  the set of all \((\cU,P_{\cV},\delta)\)-admissible models by $\cA(\cU,P_{\cV},\delta)$.\footnote{Note that when $f$ is convex, for any $P_{\cV}\in\cP_{\delta}$,  $\cA(\cV,P_{\cV},\delta)=\{\vecx^*(P_\cV)\}$. This follows from the uniqueness of the global empirical loss minimizer for convex loss functions.}
\end{defn}\vspace{-0.5em}
\vspace{-0.5em}
\subsection{Protocol objectives}\vspace{-0.25em}
\label{sec:validatedlearninggoal}
The goal of a validated learning protocol is that each honest agent either converges to an admissible consensus model or correctly identifies that there is at least one Byzantine agent in the network. Formally, let $\cH_{\valid}$ (resp. $\cH_{\invalid}$) be the subset of honest agents $v$ that set their validation states $S_v$ to $\valid$ (resp. $\invalid$) when the protocol terminates. Let $\epsilon$ (typically $o(1)$) be a non-negative tolerance parameter. We say a validated learning protocol $\Pi$ terminates $\epsilon$-\emph{successfully} if one of the following outcomes is reached. 
\begin{itemize}
	\item[\emph{a)}] $\cB=\phi$, $\cH_{\valid}=\cV$, and $\frac{1}{|\cV|}\sum_{v\in\cV}
\ltwo{{\vecx}^*(P_{\cV})-\hat{\vecx}_v}^2<\epsilon$.
\item[\emph{b)}] $\cB\neq\phi$,  $\cH_{\valid}\neq\phi$, and there exists $\hat{\vecx}\in\cA(\cH_{\valid},P_{\cV},\delta)$ such that $
\frac{1}{|\cH_{\valid}|}\sum_{v\in\cH_{\valid}}\ltwo{\hat{\vecx}-\hat{\vecx}_v}^2<\epsilon$.
\item[\emph{c)}] $\cB\neq\phi$ and $\cH_{\invalid}=\cH$.
\end{itemize}
\vspace{-0.5em}
\subsection{Assumptions}\vspace{-0.25em} Throughout this paper, assume that the Byzantine set $\cB$ satisfies Assumption~\ref{asm:source}, the loss function $f$ satisfies Assumptions~\ref{asm:bounded},~\ref{asm:smooth} and~\ref{asm:strongcvx}, the local data distributions $P_v$ satisfy Assumptions~\ref{asm:boundedlossvariance} and~\ref{asm:boundedgradientvariance} with respect to $f$, and the global data distribution $P_{\cV}$ satisfies Assumption~\ref{asm:deltahet} with respect to $f$.\vspace{-0.25em}
\begin{assumption}[Existence of a source component]\label{asm:source}
The graph \(\mathcal{G}_{\mathcal{B}^c} \triangleq (\mathcal{V} \setminus \mathcal{B}, \mathcal{E} \setminus \mathcal{E}_{\mathcal{B}})\) is connected, where \(\mathcal{E}_{\mathcal{B}} = \mathcal{E} \cap (\mathcal{B} \times \mathcal{V} \cup \mathcal{V} \times \mathcal{B})\) is the set of edges that are incident on nodes in \(\mathcal{B}\).
\end{assumption}\vspace{-0.5em}
\begin{assumption}[Finite loss]
\label{asm:bounded}
$\sup_{D \in \mathcal{D}} f(\vecx,D)<\infty\ \forall\  \vecx\in\mathbb{R}^d$.
\end{assumption}\vspace{-0.5em}
\begin{assumption}[\(\beta\)-smooth]
\label{asm:smooth}
There exists \(\beta > 0\) such that for all \(\vecx, \vecx' \in \mathbb{R}^d\) and \(D \in \mathcal{D}\),
\[
\ltwo{\nabla_{\vecx} f(\vecx, D) - \nabla_{\vecx} f(\vecx', D)} \leq \beta \ltwo{ \vecx - \vecx' }.\]
\end{assumption}\vspace{-0.75em}
	
\begin{assumption}[\(\mu\)-strongly convex]
\label{asm:strongcvx}
There exists \(\mu > 0\) such that for all \(\vecx, \vecx' \in \mathbb{R}^d\) and \(D \in \mathcal{D}\),
\[
f(\vecx', D) \geq f(\vecx, D) + \langle \nabla_{\vecx} f(\vecx, D), \vecx' - \vecx \rangle + \frac{\mu}{2} \ltwo{\vecx' - \vecx }^2.
\]
\end{assumption}\vspace{-0.75em}


\begin{assumption}[Bounded loss variance] \label{asm:boundedlossvariance} There exists $\sigma_f>0$ such that, for each agent $v$ and for every $\vecx\in\bbR^d$,
\begin{equation}
	\bbE_{D\sim P_v}(f(x,D))^2-(\bbE_{D\sim P_v}f(x,D))^2\leq \sigma_f^2
\end{equation}	
\end{assumption}\vspace{-0.5em}
\begin{assumption}[Bounded gradient variance] \label{asm:boundedgradientvariance} There exists $\sigma_g>0$ such that, for each agent $v$ and for every $\vecx\in\bbR^d$,
\begin{equation}
	\bbE_{D\sim P_v}\ltwo{\nabla_{\vecx}f(x,D)}^2-\ltwo{\bbE_{D\sim P_v}\nabla_{\vecx}f(x,D)}^2\leq \sigma_g^2
\end{equation}	
\end{assumption}\vspace{-0.5em}

\begin{assumption}[\(\delta\)-heterogeneous distribution] There exists $\delta>0$ such that $P_{\cV}$ is \(\delta\)-Heterogeneous with respect to $f$.	\label{asm:deltahet}
\end{assumption}\vspace{-0.5em}
The following assumption is used in Theorem~\ref{thm:optimality}
\begin{assumption}[{$f$-completeness}] \label{asm:consistency} We say that $\cP_{\delta}$ is {$f$-complete}  if for every $P_{\cV}\in\cP_{\delta}$, $\hat{\vecx}\in\bbR^d$, and $(\hat{\vecg}_v:v\in\cB)\in\bbR^{d\times|\cB|}$ satisfying
\begin{align*}
	&\sum_{u \in \cH} \mathbb{E}_{D \sim P_u} \nabla_{\vecx}f(\hat{\vecx}, D) + \sum_{v \in \cB} \hat{\vecg}_v=  0\ \textrm{and}\\
&\frac{1}{|\cV|}\left[\sum_{u \in \cH} \ltwo{\mathbb{E}_{D \sim P_u} \nabla_{\vecx}f(\hat{\vecx}, D)}^2 + \sum_{v \in \cB} \ltwo{\hat{\vecg}_v}^2\right]\leq \delta,
\end{align*}
 there exist distributions $(Q_v:v\in\cB)$ such that $P_{\cH}\times Q_{\cB}\in\cP_{\delta}$ and  $\mathbb{E}_{D \sim Q_v} \nabla_{\vecx}f(\hat{\vecx}, D)=\hat{\vecg}_v$ for each $v\in\cB$. 
	
\end{assumption}\vspace{-0.5em}
\section{Main Results}\label{sec:mainresults}
\vspace{-0.25em}
\begin{theorem}[The \protocol protocol]\label{thm:protocol} Suppose that Assumptions~\ref{asm:source}-\ref{asm:deltahet} are satisfied. Then, there exists a validated learning protocol \protocol over $T$ rounds with the following guarantees under any Byzantine attack, 
\begin{enumerate}
	\item[A.] \textbf{Successful termination:} \protocol terminates $O(1/T)$-successfully with probability $1-\exp(-O(KT))$.
	\item[B.] \textbf{Convergence rate:} When $\cH_{\valid}\neq\phi$, there exists $\hat{\vecx}\in\cA(\cH_{\valid},P_{\cV},\delta)$ such that the final model vectors $(\hat{\vecx}_v:v\in\cH_{\valid})$ satisfy $\sum_{v\in\cH_{\valid}} \ltwo{\hat{\vecx}_v-\hat{\vecx}}^2=O(|\cE|\delta/T)$.
	\item[C.] \textbf{Computational and Communication complexity:} With respect to the model dimension $d$, the number of iterations $T$, and the mini-batch size $K$, for each honest agent, the complexities of \protocol scale as:
	\begin{itemize}
		\item[i)] Computational complexity: $O(c(d)TK+|\cV||\cE|dT)$, where $c(d)$ is the scaling (w.r.t. $d$) of the complexity of evaluating $\nabla_{\vecx} f(\cdot,\cdot)$.
		\item[ii)] Communication complexity: $O(dT+|\cE|^2)$.
	\end{itemize} \end{enumerate}
\end{theorem}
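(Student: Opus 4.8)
The plan is to prove Theorem~\ref{thm:protocol} constructively: we build \protocol as a decentralized first-order optimizer wrapped in a validation layer, and then verify assertions A, B, C in turn. The optimizer core is a stochastic gradient-tracking/consensus scheme --- in round $t$ each honest agent $v$ mixes its model iterate with those of its neighbors, takes a mini-batch gradient step from $\vecD_v^{(t)}$ with step size of order $1/t$, and maintains a tracking variable accumulating the (claimed) neighbor gradients. The validation layer has two ingredients: (i) a pairwise consistency check, in which every honest agent re-derives the message it should have received along each edge from everything it has seen so far and relays any mismatch; and (ii) a network-wide aggregation of the scalar $\widehat{H}\triangleq\tfrac{1}{|\cV|}\sum_{v}\ltwo{\bar\vecg_v}^2$, where $\bar\vecg_v$ is agent $v$'s empirical mean of $\nabla_{\vecx} f(\cdot,D)$ at the (approximate) consensus iterate. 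An honest agent outputs \invalid\ if it ever sees or receives a mismatch report, or if $\widehat{H}>\delta+\eps_T$ for a vanishing slack $\eps_T$ tuned to control estimation error; otherwise it outputs \valid\ with its final iterate. By Assumption~\ref{asm:source} the honest subgraph $\cG_{\cB^c}$ is connected, so an alarm raised by any honest agent reaches all of $\cH$ within its diameter, and, absent any mismatch, $\widehat H$ is a common value across $\cH$; hence, when $\cB\neq\phi$, either $\cH_{\invalid}=\cH$ or $\cH_{\valid}=\cH$.

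For assertion A we split on $\cB$. If $\cB=\phi$: Assumptions~\ref{asm:bounded}--\ref{asm:strongcvx} place us in the standard regime of strongly convex decentralized optimization, so on a concentration event for the per-round mini-batch gradients --- which holds with probability $1-\exp(-O(KT))$ via the bounded-variance Assumption~\ref{asm:boundedgradientvariance} --- every agent's iterate lands within squared distance $O(1/T)$ of $\vecx^*(P_{\cV})$; hence each $\bar\vecg_v$ concentrates around $\E_{D\sim P_v}\nabla_{\vecx} f(\vecx^*(P_{\cV}),D)$, so $\widehat H\le\delta+\eps_T$ by Assumption~\ref{asm:deltahet}, no honest agent alarms, $\cH_{\valid}=\cV$, and outcome (a) holds with $\epsilon=O(1/T)$. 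If $\cB\neq\phi$: should any Byzantine agent ever emit a message failing a pairwise check, the alarm propagates and $\cH_{\invalid}=\cH$, \ie\ outcome (c). Otherwise all Byzantine messages are mutually consistent, so the honest agents' execution is indistinguishable from one in which each $v\in\cB$ is honest with a fixed effective distribution $Q_v$ --- the ``benign'' reduction of Remark~\ref{rem:benignattack}. The optimizer then converges at rate $O(1/T)$ to $\hat\vecx=\vecx^*(P_\cH\times Q_\cB)$, whose first-order optimality reads $\sum_{u\in\cH}\E_{D\sim P_u}\nabla_{\vecx} f(\hat\vecx,D)+\sum_{v\in\cB}\E_{D\sim Q_v}\nabla_{\vecx} f(\hat\vecx,D)=0$, \ie\ condition (a) of Definition~\ref{def:admissible} with $\hat\vecg_v=\E_{D\sim Q_v}\nabla_{\vecx} f(\hat\vecx,D)$. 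Now $\widehat H$ equals the heterogeneity of $P_\cH\times Q_\cB$ up to $\pm\eps_T$: if $\widehat H\le\delta+\eps_T$ the honest agents validate and (the effective heterogeneity being then $\le\delta$ up to a slack absorbed by the tolerance) condition (b) of Definition~\ref{def:admissible} also holds, so $\hat\vecx\in\cA(\cH,P_{\cV},\delta)\subseteq\cA(\cH_{\valid},P_{\cV},\delta)$ --- outcome (b); if $\widehat H>\delta+\eps_T$ then, being common across $\cH$, it forces every honest agent to output \invalid\ --- outcome (c). The only source of failure probability is the $KT$-sample concentration, so A holds with probability $1-\exp(-O(KT))$.

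Assertion B is the consensus-plus-suboptimality estimate of the gradient-tracking core restricted to $\cG_{\cB^c}$: under $\mu$-strong convexity and $\beta$-smoothness it attains an $O(1/T)$ rate whose leading constant is governed by the spectral gap of the mixing matrix --- bounded by a graph quantity of order $|\cE|$ --- and by the squared norm of the aggregate gradient at the limit $\hat\vecx$, which is $O(\delta|\cV|)$ precisely because $\hat\vecx$ is admissible; this is exactly where the gradient-norm heterogeneity metric pays off, since $\delta$ directly bounds the optimizer's drift term, giving $\sum_{v\in\cH_{\valid}}\ltwo{\hat\vecx_v-\hat\vecx}^2=O(|\cE|\delta/T)$. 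Assertion C follows by inspecting the protocol: per round and per honest agent, the gradient step costs $O(c(d)K)$ and the consensus/verification bookkeeping --- a nested mixing-and-relaying routine over the received messages --- costs $O(|\cV||\cE|d)$, giving $O(c(d)TK+|\cV||\cE|dT)$ over $T$ rounds; communication is $O(d)$ per edge per round for the $d$-dimensional iterates and tracking variables, plus a one-time overhead of $O(|\cE|^2)$ per agent for disseminating and cross-checking edge transcripts, hence $O(dT+|\cE|^2)$.

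The main obstacle I anticipate is certifying the validation layer in the adversarial case, \ie\ that the trichotomy of Section~\ref{sec:validatedlearninggoal} is exhaustive: that no Byzantine strategy can simultaneously (i) keep every pairwise message consistent, (ii) steer the honest validators to a point that is \emph{not} admissible with respect to $\cH_{\valid}$, and (iii) hold $\widehat H$ under threshold. Ruling this out requires showing that message-level consistency forces the benign-simulation reduction exactly --- so that a well-defined $Q_\cB$ exists and the honest-subgraph dynamics, hence their limit, are unaffected by $\cB$ --- that the network-wide evaluation of $\widehat H$ is itself robust to consistent-but-malicious relaying, and that $\eps_T$ can be tuned so that passing validation under a benign attack still certifies admissibility for the nominal $\delta$ up to the success tolerance; all three hinge on the connectivity in Assumption~\ref{asm:source}. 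The remaining ingredients --- Bernstein/Hoeffding concentration of the $KT$-sample estimates under Assumptions~\ref{asm:boundedlossvariance}--\ref{asm:boundedgradientvariance} and the $O(1/T)$ analysis of stochastic gradient tracking under Assumptions~\ref{asm:bounded}--\ref{asm:strongcvx} --- are routine but lengthy.
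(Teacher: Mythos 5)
Your treatment of the adversary-free case, the complexity accounting, and the overall two-layer structure (optimizer plus validation) track the paper reasonably well, but the adversarial/undetected branch of assertion A rests on a step that does not hold and that the paper deliberately avoids. You argue: if all Byzantine messages pass the consistency checks, then ``the honest agents' execution is indistinguishable from one in which each $v\in\cB$ is honest with a fixed effective distribution $Q_v$,'' and you then import first-order optimality of $\vecx^*(P_\cH\times Q_\cB)$ to get condition (a) of Definition~\ref{def:admissible}. Message-level consistency does not force this reduction: a Byzantine agent can send the same, recursion-consistent iterates to all neighbors while injecting gradient vectors that vary arbitrarily over rounds and correspond to no fixed distribution (and need not be i.i.d.\ samples of anything). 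You flag this yourself as the ``main obstacle,'' but the proof as written leaves it open, and with it both the existence of a well-defined limit point and the zero-sum condition (a). Note also that the benign simulation of Remark~\ref{rem:benignattack} is an attack the adversary \emph{may} launch (it powers the converse, Theorem~\ref{thm:optimality}, via Assumption~\ref{asm:consistency}); it cannot be assumed on the achievability side.

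The paper closes this gap by a different mechanism that your protocol is missing two pieces of. First, admissibility in Definition~\ref{def:admissible} only requires the existence of \emph{vectors} $\hat{\vecg}_v$ for $v\in\cV\setminus\cU$, not distributions, so no $Q_\cB$ ever needs to exist: the Byzantine agents' reported (weighted-average) gradient estimates serve directly as the $\hat{\vecg}_v$, provided the protocol explicitly checks both an \emph{optimality} condition $\ltwo{\frac{1}{|\cV|}\sum_u\hat{\vecg}_u^*}\le\epsilon$ and the heterogeneity condition; your validation layer only aggregates the heterogeneity statistic $\widehat H$, so condition (a) is uncertified once the distributional reduction is dropped. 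Second, convergence of the honest iterates to a common consensus under consistent-but-arbitrary Byzantine inputs is established in the paper by a perturbation argument (Lemma~\ref{lem:perturbation}): modeling undetected Byzantine behavior as additive perturbations $\mZ^{(t)}$, strong convexity and smoothness with $\alpha^{(t)}<\mu/\beta^2$ give a contraction factor $\gamma<1$, so bounded perturbations (enforced by the norm bounds of Lemma~\ref{lem:bounded} in local validation) decay geometrically and the validated iterates reach consensus at rate $O(1/T)$ regardless of what the adversary injects. A related, smaller issue: your local check asks each honest agent to ``re-derive the message it should have received'' from a neighbor, which it cannot do for a Byzantine neighbor whose data it does not see; the paper instead checks cross-edge equality and the linear update recursion of the transcripts via keyed polynomial hashes and validated broadcast, which is also what produces the $O(|\cE|^2)$ communication term and the hash-collision failure probability in Lemma~\ref{lem:validated}.
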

\vspace{-0.5em}
\begin{remark} Even though honest agents do not know {\em a priori} whether or not $\cB=\phi$, the rate of convergence of the model iterates $\vecx_{v}^{(t)}$ to the global optimal consensus model in the setting with $\cB=\phi$ is identical to that of non-Byzantine stochastic gradient descent (where it is known that $\cB=\phi$). Further, regardless of $\cB$, the computational and communication complexities of \protocol scale comparably to those for non-Byzantine distributed stochastic gradient descent~\cite{NedicO:09}.
	
\end{remark}\vspace{-0.5em}
\begin{theorem}[Optimality of~\protocol]\label{thm:optimality} Suppose $(f,\cP_{\delta})$ satisfy Assumptions~\ref{asm:smooth}-~\ref{asm:consistency}. Let $\Pi$ be any validated learning protocol such that under no Byzantine attack, $\Pi$ terminates $\epsilon$-successfully with probability $1-\gamma$. For any $\cB\subsetneq\cV$, $P_{\cV}\in\cP_{\delta}$, and $\hat{\vecx}\in\cA(\cV\setminus\cB,P_{\cV},\delta)$, there exists a Byzantine attack such that, with probability at least $1-\gamma-o(1)$, $\Pi$ terminates {$O(1/T)$-successfully} with $S_v=\valid$ for each $v\in\cV$ and {$\sum_{v\in\cV\setminus\cB}\ltwo{\hat{\vecx}_v-\hat{\vecx}}^2=O(|\cV\setminus\cB|\delta/T)$}. 
\end{theorem}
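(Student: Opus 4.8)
The plan is to realize the lower bound by exhibiting an explicit "benign" Byzantine attack — the one sketched in Remark~\ref{rem:benignattack} — that is \emph{information-theoretically indistinguishable} from an honest execution on a perturbed global data distribution. Fix $\cB\subsetneq\cV$, $P_{\cV}\in\cP_{\delta}$, and a target admissible model $\hat{\vecx}\in\cA(\cV\setminus\cB,P_{\cV},\delta)$. By Definition~\ref{def:admissible} there exist surrogate gradients $(\hat{\vecg}_v:v\in\cB)$ with $\sum_{u\in\cH}\E_{D\sim P_u}\nabla_{\vecx}f(\hat{\vecx},D)+\sum_{v\in\cB}\hat{\vecg}_v=0$ and the second-moment bound $\frac{1}{|\cV|}[\sum_{u\in\cH}\ltwo{\E_{D\sim P_u}\nabla_{\vecx}f(\hat{\vecx},D)}^2+\sum_{v\in\cB}\ltwo{\hat{\vecg}_v}^2]\le\delta$. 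Invoking the $f$-completeness assumption (Assumption~\ref{asm:consistency}), we obtain distributions $(Q_v:v\in\cB)$ such that $P':=P_{\cH}\times Q_{\cB}\in\cP_{\delta}$ and $\E_{D\sim Q_v}\nabla_{\vecx}f(\hat{\vecx},D)=\hat{\vecg}_v$ for each $v\in\cB$. The attack is then: each $v\in\cB$ draws its mini-batches i.i.d.\ from $Q_v$ (instead of $P_v$) and runs the honest protocol $\Pi$ faithfully on this fabricated data. This is a valid Byzantine strategy since Byzantine agents may deviate arbitrarily, and in particular it respects causality.

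The key observation is a coupling/indistinguishability step: the joint distribution of \emph{everything observed by every honest agent} under $(\Pi,\text{benign attack},P_{\cV})$ is \emph{exactly} the joint distribution under $(\Pi,\text{no attack},P')$. Indeed, in both worlds every agent (honest or not) generates data i.i.d.\ from the same per-agent distribution ($P_u$ for $u\in\cH$, $Q_v$ for $v\in\cB$) and executes $\Pi$ honestly; the transcripts are identically distributed. Hence for any event $E$ measurable with respect to the honest agents' views, $\Pr[E\mid\text{attack},P_{\cV}]=\Pr[E\mid\text{no attack},P']$. Apply this with $E=\{\Pi\text{ terminates }\epsilon\text{-successfully (case (a)) under }P'\}$. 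By hypothesis, since $P'\in\cP_{\delta}$ and there are no adversaries in that world, $\Pr[E\mid\text{no attack},P']\ge 1-\gamma$. By Definition~\ref{def:admissible}'s footnote (or directly, if $f$ is convex then $\cA(\cV,P',\delta)=\{\vecx^*(P')\}$; more generally one argues $\hat\vecx$ is the relevant limit), case (a) $\epsilon$-success forces $S_v=\valid$ for all $v\in\cV$ and $\frac{1}{|\cV|}\sum_{v\in\cV}\ltwo{\vecx^*(P')-\hat\vecx_v}^2<\epsilon=O(1/T)$; transferring back, with probability $\ge 1-\gamma$ the attacked execution also has all $S_v=\valid$.

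It then remains to identify $\vecx^*(P')$ with the prescribed target $\hat{\vecx}$ (up to the $O(\delta/T)$ slack) and to rephrase the bound in the theorem's form $\sum_{v\in\cV\setminus\cB}\ltwo{\hat{\vecx}_v-\hat{\vecx}}^2=O(|\cV\setminus\cB|\delta/T)$. When $f$ is strongly convex (Assumption~\ref{asm:strongcvx}), $\vecx^*(P')$ is the unique zero of $\sum_{u\in\cH}\E_{P_u}\nabla f(\cdot)+\sum_{v\in\cB}\E_{Q_v}\nabla f(\cdot)$; since $\hat\vecx$ satisfies the stationarity condition (a) with exactly these surrogate gradients, $\hat\vecx$ is \emph{a} stationary point of that same averaged objective, and strong convexity gives $\ltwo{\hat{\vecx}-\vecx^*(P')}\le\frac{1}{\mu}\ltwo{\text{residual}}$; the residual is controlled by $\delta$ through the second-moment bound (and vanishes exactly in the convex case, where $\hat\vecx=\vecx^*(P')$). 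Combining with the per-agent $\epsilon$-success bound via triangle inequality and summing over $v\in\cV\setminus\cB$ yields the claimed rate, with the extra $o(1)$ in the probability absorbing the coupling error and any high-probability concentration events used in locating $\vecx^*(P')$.

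The main obstacle is the final identification step: $\epsilon$-successful termination in case (a) only guarantees closeness of the $\hat\vecx_v$ to $\vecx^*(P')$, whereas the theorem demands closeness to the \emph{specific} admissible model $\hat\vecx$ chosen in advance. Bridging this requires showing $\hat\vecx$ and $\vecx^*(P')$ coincide (convex case) or differ by only $O(\sqrt{\delta/T})$ in norm — which is exactly where Assumption~\ref{asm:consistency} ($f$-completeness) is doing the real work, by certifying that the target's surrogate gradients are genuinely realizable as honest-looking gradient expectations under an admissible distribution. Care is also needed to ensure the chosen $Q_v$'s satisfy the variance assumptions (Assumptions~\ref{asm:boundedlossvariance},~\ref{asm:boundedgradientvariance}) so that $\Pi$'s guarantee genuinely applies in the fabricated world; this may require a mild strengthening of $f$-completeness or a note that the protocol's guarantee is uniform over $\cP_{\delta}$ intersected with the variance classes.
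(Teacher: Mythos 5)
Your proposal is correct and follows essentially the same route as the paper: the paper's proof is exactly the ``benign attack'' of Remark~\ref{rem:benignattack} (Protocol~\ref{alg:benignattack}), where $f$-completeness supplies distributions $Q_{\cB}$ realizing the surrogate gradients, Byzantine agents simulate honest execution on data drawn from $Q_{\cB}$, and indistinguishability from an honest run under $P_{\cH}\times Q_{\cB}\in\cP_{\delta}$ transfers the no-attack guarantee (with strong convexity forcing $\hat{\vecx}=\vecx^*(P_{\cH}\times Q_{\cB})$). Your additional care about the identification step and about whether the $Q_v$ satisfy the variance assumptions actually fills in details the paper's one-paragraph sketch leaves implicit.
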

\section{The \protocol Protocol}\label{sec:validprotocol}
In this section, we describe our proposed \protocol protocol. Due to space constraints, we sketch the guarantees of Theorem~\ref{thm:protocol} in the appendix. \protocol\ (see Protocol~\ref{alg:valid} in the appendix), consists of two phases, the learning phase consisting of the \textsc{LearnModel} sub-protocol (Protocol~\ref{alg:dgd} in the appendix) and the validation phase consisting of the \textsc{ValidateModel} sub-protocol (Protocol~\ref{alg:validationphase} in the appendix). \vspace{-0.5em}
\subsection{Learning phase}\vspace{-0.25em}\label{sec:learningphase}
The learning phase (refer to Protocol~\ref{alg:dgd} in the appendix and Figure~\ref{fig:roundt}), is based on distributed stochastic gradient for non-Byzantine networks (\emph{c.f.}~\cite{NedicO:09}). In each round $t=1,2,\ldots,T$, each agent $v$ computes its model iterate $\vecx_{v}^{(t)}$ and the gradient iterate $\vecg_{v}^{(t)}$ and transmits these to all its neighbors.
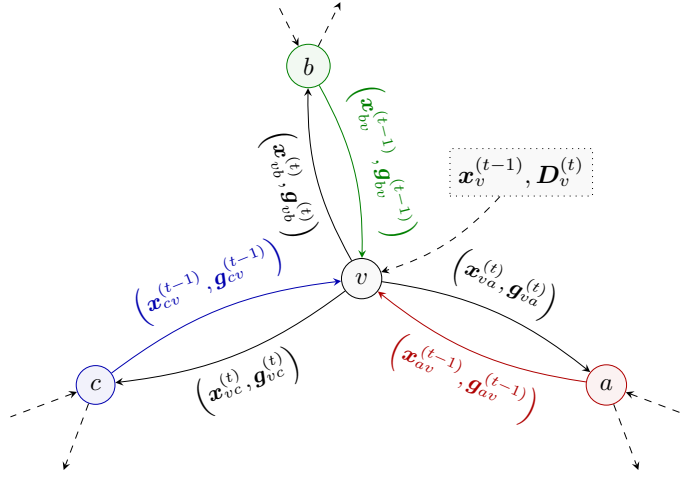
\begin{figure} 
\centering
 
        \resizebox{.5\textwidth}{!}{\begin{tikzpicture}[scale=1.5,>=stealth]
            \node[circle, draw, fill=gray!5!white, minimum size=0.5cm] (v) at (0,0) {$v$};
            \node[circle, draw=darkred, fill=darkred!5!white, minimum size=0.5cm] (a) at (2.3,-1) {$a$};
            \node[circle, draw=darkgreen, fill=darkgreen!5!white, minimum size=0.5cm] (b) at (-0.5,2) {$b$};
            \node[circle, draw=darkblue, fill=darkblue!5!white, minimum size=0.5cm] (c) at (-2.5,-1) {$c$};
           
 			\node[rectangle, draw=black, dotted, fill=gray!5!white, minimum size=0.5cm] (vin) at (1.5,1) {$\vecx_{v}^{(t-1)},\mD_v^{(t)}$};
           
            \draw[->, dashed] (a) -- ++(0.3,-0.8);
            \draw[->, dashed] (b) -- ++(0.3,0.6);
            \draw[->, dashed] (c) -- ++(-0.3,-0.8);
    
            \draw[<-, dashed] (a) -- ++(0.8,-0.3);
            \draw[<-, dashed] (b) -- ++(-0.3,0.6);
            \draw[<-, dashed] (c) -- ++(-0.8,-0.3);
    
            
            \draw[->, black,dashed] (vin) to[bend left=15] node[midway, above, sloped, font=\small] {} (v);
            
            \draw[->, black] (v) to[bend left=15] node[midway, above, sloped, font=\small] {$\left(\vecx_{va}^{(t)},\vecg_{va}^{(t)}\right)$} (a);
            
            \draw[->, darkred] (a) to[bend left=15] node[midway, below, sloped, font=\small] {$\left(\vecx_{av}^{(t-1)},\vecg_{av}^{(t-1)}\right)$} (v);
            
            \draw[->, black] (v) to[bend left=15] node[midway, below, sloped, font=\small] {$\left(\vecx_{vb}^{(t)},\vecg_{vb}^{(t)}\right)$} (b);
            \draw[->, darkgreen] (b) to[bend left=15] node[midway, above, sloped, font=\small] {$\left(\vecx_{bv}^{(t-1)},\vecg_{bv}^{(t-1)}\right)$} (v);

            \draw[->, black] (v) to[bend left=15] node[midway, below, sloped, font=\small] {$\left(\vecx_{vc}^{(t)},\vecg_{vc}^{(t)}\right)$} (c);
            \draw[->, darkblue] (c) to[bend left=15] node[midway, above, sloped, font=\small] {$\left(\vecx_{cv}^{(t-1)},\vecg_{cv}^{(t-1)}\right)$} (v);

        \end{tikzpicture}}
                \caption{The above figure shows the messages passed to node $v$ by its neighbors in round $t-1$  and the messages passed from node $v$ to its neighbor in round $t$ of the \textsc{LearnModel} protocol. Our \textsc{LocalValidation} protocol verifies if these messages are consistent with each other. For instance, if agent $v$ performs all computations honestly, the messages for a single round must satisfy the property that $\vecx_{va}^{(t)}= \vecx_{vb}^{(t)}=\vecx_{vc}^{(t)}$ and $\vecg_{va}^{(t)}= \vecg_{vb}^{(t)}=\vecg_{vc}^{(t)}$ for all $t$. Further, the messages passed in different rounds are related by the equality $\vecx_{va}^{(t)}=(1-3\eta^{(t)}) \vecx_{va}^{(t-1)}+\eta^{(t)}(\vecx_{av}^{(t-1)}+\vecx_{bv}^{(t-1)}+\vecx_{cv}^{(t-1)})-\vecg_{va}^{(t)}$.}\label{fig:roundt}
\end{figure}\vspace{-0.25em}
\subsection{Validation phase}\vspace{-0.25em}\label{sec:validationphase}
The validation phase comprises of local validation, global validation, and validation state agreement sub-protocols. Local validation examines message consistency in relation to the inputs, as shown in Figure~\ref{fig:roundt}. Global validation confirms the agents' model and gradient iterates are consistent with a $\delta$-heterogeneous data distribution. Roughly speaking, successful local and global validations indicate that any Byzantine agents' iterates align with a $(\cV\setminus\cB,P_{\cV},\delta)$-admissible model. If either of these checks fails, the detecting agent's alarm state $S_v^{(0)}$ flips to $\invalid$. The validation state agreement sub-phase ensures unanimous $\invalid$ final states among honest agents if even one alarm state is $\invalid$. However, even when the alarm state of all agents is $\valid$, Byzantine activity during the validation state agreement phase may still lead some honest agents to set  $\invalid$. In this scenario, agents in $\cH_{\valid}$  converge to a valid consensus model, while those in $\cH_{\invalid}$ finalize their states as $\invalid$.

\subsubsection{Primitives}
We introduce the validated broadcast primitive  and the hash function employed in our protocol. 
\paragraph{Validated Broadcast}
This sub-protocol aims to uniformly broadcast a source agent's message, ensuring that honest agents either receive it intact or flag their validation state as $\invalid$. Initially, only the source agent has the target message; others have a null message `$\varnothing$'. Agents exchange messages with neighbors, updating their own only if it was null or flagging $\invalid$ upon discrepancies. After $\min\{|\cV|,|\cE|\}$ exchanges, all honest agents either possess the unaltered message or have flagged $\invalid$. Refer to Protocol~\ref{alg:validatedbroadcast} in the appendix.
\paragraph{Polynomial Hashing} Let $\bbF$ be a large enough finite field. Our hashing schemes are based on the following polynomial hash.  For any vector $\xi\in\bbR^{d(T-2)}$ and $s\in\bbF$, define the hash $\textsc{Hash}:\bbF\times \bbR^{d(T-2)}\to \bbR$ as $\textsc{Hash}(s,\xi)=\sum_{i=1}^{d(T-2)} \xi_i \mbox{int}(s^{i-1})$, where, $\mbox{int}(s^{i-1})$ is integer representation of the finite field element $s^{i-1}$. Note that for a fixed key $s$, $\textsc{Hash}(s,\xi)$ is a linear function of $\xi$.
\subsubsection{Local Validation} \label{sec:localvalidation}

 If agent $v$ is honest, the transcripts on the edges incoming and outgoing from it must satisfy consistency and boundedness properties. Specifically, let $(\vecx_{uv}^{(t)},\vecg_{uv}^{(t)})$ be the pair of vectors transmitted by agent $u$ to agent $v$ at the end of round $t$ of the learning phase. By Lemma~\ref{lem:bounded}, an honest agent's transmissions must be bounded in magnitude. Next, let $\vecX^{\mathrm{out}}_{uv} \triangleq [{\vecx_{uv}^{(t)}}^\top:t=1,2,\ldots,T]^\top$, $\vecX^{\mathrm{in}}_{uv} \triangleq [{\vecx_{uv}^{(t)}}^\top:t=0,1,\ldots,T-1]^\top$, $\vecX^{\mathrm{in},\eta}_{uv} \triangleq [{\eta^{(t)}\vecx_{uv}^{(t)}}^\top:t=0,1,\ldots,T-1]^\top$, and  and $\vecGamma_{uv} =[{\alpha^{(t)}\vecg_{uv}^{(t)}}^\top:t=1,2,\ldots,T]^\top$ denote $\bbR^{dT}$-valued partial transcripts on the edge $(u,v)$. An honest agent $v$ implies that there exist $\overline{\vecX}_v^{\mathrm{out}}, \overline{\vecX}_v^{\mathrm{in}}, \overline{\vecX}_v^{\mathrm{in},\eta}$, and $\overline{\vecGamma}_v $ such that $\overline{\vecX}_v^{\mathrm{out}}=\vecX_{vu}^{\mathrm{out}}$, $\overline{\vecX}_v^{\mathrm{in}}=\vecX_{vu}^{\mathrm{in}}$, $\overline{\vecX}_v^{\mathrm{in},\eta}=\vecX_{vu}^{\mathrm{in},\eta}$, and $\overline{\vecGamma}_v=\vecGamma_{vu}$ for $u\in\cN(v)$. Further, if every agent performs communication and computations with these variables honestly, these variables further satisfy  $\overline{\vecX}_v^{\mathrm{out}}=\overline{\vecX}_v^{\mathrm{in}}+\sum_{u\in\cN(v)}(\overline{\vecX}_u^{\mathrm{in},\eta}-\overline{\vecX}_v^{\mathrm{in},\eta})-\overline{\vecGamma}_v$. See Figure~\ref{fig:allrounds}, in the appendix for a visual depiction.

The \textsc{LocalValidate} protocol first checks whether the norms of the messages transmitted by each neighbouring agent satisfy the bounds of Lemma~\ref{lem:bounded} and then efficiently confirms transcript vector consistency for each agent $v$ by using hash values instead of transmitting entire transcripts, thereby reducing communication cost. Agents first draw a private key $s_v$ to compute and broadcast hash values of their incident transcript vectors. Following this, private keys are broadcast. This sequence prevents Byzantine agents from knowing other agents' keys before sharing their own hash values. Agents then recompute and broadcast hash values of their incident transcript vectors using all received keys. Broadcasts are executed using the validated broadcast protocol. Any discrepancies in consistency checks (or during the validated broadcast) prompt agents to set their alarm state $S_v^{(0)}$ to $\invalid$. Please refer to the appendix for the detailed description (Protocol~\ref{alg:localvalidation} in the appendix).

\subsubsection{Global Validation}\label{sec:globalvalidation}
The intuition behind the global validation phase is that when $\cB=\phi$, for some collection of vectors $(\overline{\vecg}_v^*:v\in\cV)$ taken from $\bbR^d$,  the gradient vectors must satisfy $\ltwo{\bbE_{\mD^{(t)}}\vecg_{vu}^{(t)}-\overline{\vecg}_v^*}^2=O(1/t)$ for all $(v,u)\in\cE$, $ \sum_{v\in\cV} \overline{\vecg}_v^* = 0$, and $\sum_{v\in\cV} \ltwo{\overline{\vecg}_v^*}^2 \leq \delta$.  


\paragraph{Final gradient estimation and broadcast} Let $\gamma$ be small enough (as specified in Lemma~\ref{lem:perturbation} in the appendix). For each neighbor $u\in\cN(v)$, node $v\in\cV$ maintains estimates $\hat{\vecg}_{uv}$ of $\overline{\vecg}_u^*$, $\hat{\ell}_{uv}$ of $\ltwo{\overline{\vecg}_u^*}$ using the following weighted sums:
\begin{align}
\hat{\vecg}_{uv}&  = \frac{1-\gamma^{T-1}}{1-\gamma}\sum_{i=1}^{T-1}\gamma^{T-i-1}\vecg_{uv}^{(i)},\ \mathrm{ and}\\
\hat{\ell}_{uv}&  = \frac{1-\gamma^{T-1}}{1-\gamma}\sum_{i=1}^{T-1}\gamma^{T-i-1}\ltwo{\vecg_{uv}^{(i)}}.
\end{align}

Next, every node $v$ broadcasts $((\hat{\vecg}_{uv},\hat{\ell}_{uv}):u\in\cN(v))$ using the validated broadcast algorithm.

\paragraph{Heterogeneity and optimality test} \label{sec:heterogeneity_optimality} Set a slack parameter $\epsilon>0$. Upon receiving the estimates $((\hat{\vecg}_{uv},\hat{\ell}_{uv}):(u,v)\in\cE)$, the following verification is performed by every agent. 
\begin{enumerate}
 \item {\bfseries Consistency check:} If $(\hat{\vecg}_{uv},\hat{\ell}_{uv})\neq (\hat{\vecg}_{uv'},\hat{\ell}_{uv'})$ for some $v,v'\in\cN(u)$, declare the presence of an adversary. Else, set $(\hat{\vecg}_{u}^*,\hat{\ell}_{u}^*)= (\hat{\vecg}_{uv},\hat{\ell}_{uv})$ for any $v\in\cN(u)$.
 \item {\bfseries Optimality check:} If $\ltwo{\frac{1}{|\cV|}\sum_{u\in\cV} \hat{\vecg}_{u}^*}>\epsilon$, declare the presence of an adversary.
 \item {\bfseries Heterogeneity check:} If $\frac{1}{|\cV|}{\sum_{u\in\cV} \hat{\ell}_{u}^*}^2>\delta+\epsilon$, declare the presence of an adversary.
\end{enumerate}
\vspace{-0.25em}
\subsubsection{The Validation State Agreement sub-phase} The goal of this sub-protocol is to ensure that if any honest agent detects a Byzantine presence, all others recognize this alert. Agents exchange their validation states with neighbors, updating to $\invalid$ if any neighboring state is $\invalid$. After a set number of exchanges (up to $\min\{|\cV|,|\cE|\}$ times), agents concur on $\valid$ if $\cB=\phi$, or on $\invalid$ if Byzantine presence was detected before this sub-protocol.\footnote{Note that even without prior Byzantine detection, some honest agents (\ie, those in $\cH_{\invalid}$) might adopt the validation state $\invalid$ if Byzantine agents start intervening during this sub-protocol. In such cases, all honest agents may not reach consensus on the validation state since the alert raised by an honest agent detecting this Byzantine activity may not propagate to all agents before this sub-phase terminates (in a fixed number of time steps).} See Algorithm~\ref{alg:validateconsensus}  in the appendix for specifics.

\vspace{-0.25em}
\section{Experiments}\label{sec:experiments}\vspace{-0.25em}
We test the performance of \protocol and compare it with two state-of-the-art Byzantine-resilient algorithms for decentralized training,
UBAR \cite{GuoZYXMXL:22} and Bridge-median (Bridge-M) \cite{FangYB:22}. UBAR employs a combination of distance- and performance-based algorithms to mitigate the effect of adversaries. It checks the models sent by neighboring agents and discards outliers (based on the distances between vectors of parameters). {Next, it tests the remaining models against the validation dataset, and takes the average of those with the best performances. Bridge-M screens the incoming models and applies a coordinate-wise median on the received parameters.
\subsubsection{Network} We consider an undirected graph with $n=20$ agents. The graph consists of two fully connected subgraphs of 10 agents each. These two subgraphs are connected with two edges chosen randomly. This ensures that there is a benign path between any two benign agents, provided there is at most one adversary. The graph model is presented in Appendix \ref{graph_model}.
\subsubsection{Dataset and hyperparameters} We evaluate our validation scheme by training a Neural Network (NN) with three fully connected layers. Each of the first two fully connected layers is followed by ReLU, while softmax is used at the output of the third layer.
We used a batch size of 300 and a decaying learning rate of $\frac{lr_{init}}{lr_{init}+T}$, where $lr_{init}=5$ denotes the initial learning rate.

We consider two settings of data distributions - i.i.d. and moderately non-i.i.d., both based  on the MNIST dataset \cite{deng2012mnist}. To simulate independent and identically distributed (i.i.d.) data of every agent, the whole training dataset is shuffled randomly and partitioned equally among the agents.  To model moderately non-i.i.d data of the agents, we first distribute i.i.d. data, as described above. 
Next, each of the 10 agents in the first subgraph randomly and independently picks four label classes out of ``0'', ``1'', ``2'', ``3'', and ``4'', and rotates the images from these four classes by 90 degrees. The procedure is repeated for the 10 agents in the second subgraph, but now the four label classes are chosen from ``5'', ``6'', ``7'', ``8'', and ``9''.
As a result, the data distributions among the agents within each subgraph are closer to each other than to the data distribution of the agents in the other subgraph. Also, each image in the test dataset is rotated with a probability 0.4. 


We now present our results for the moderately non-i.i.d. setting. Due to limited space, the results for i.i.d data distribution on MNIST are presented in Appendix~\ref{iid_result}.

\vspace{-0.25em}
\subsection{No-adversary case}\vspace{-0.25em}


\begin{figure}
    \centering
    \includegraphics[width=0.7\linewidth]{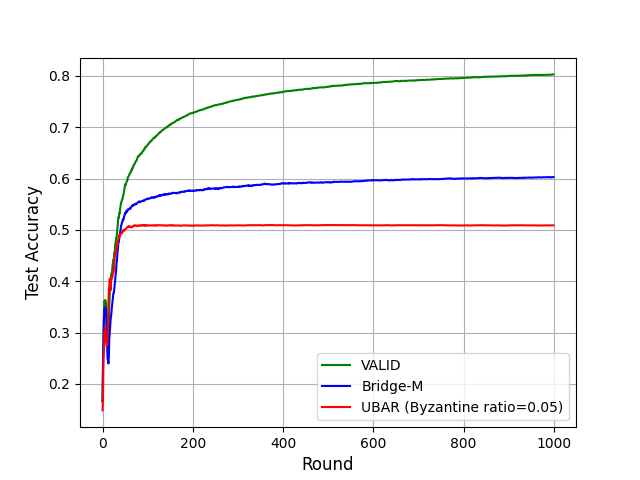}
    \caption{Performance of \protocol compared with UBAR and Bridge-M under non-i.i.d. setting.}
    \label{fig:noniid}
\end{figure}
Fig.\ref{fig:noniid} compares the performances of UBAR\footnote{Note that UBAR scheme is not {specifically} designed for non-i.i.d. setting.}, Bridge-M, and \protocol schemes in non-i.i.d. setting. 
The negative effect of non-i.i.d. data is more pronounced for UBAR and Bridge-M schemes. As UBAR is a performance-based scheme, it is hard for the agents to
distinguish whether a high loss value for a given model sent by a neighboring agent can be attributed to a potential attack or to the heterogeneous nature of the data.
Performance of the Bridge-M scheme also degrades in non-i.i.d setting, due to the coordinate-wise median choice of the parameters. 
To the contrary, in \protocol, agents take the weighted average of all models received from their neighboring agents.
\vspace{-0.25em}
\subsection{Effect of an attack}\vspace{-0.25em}
We consider the following attack model:
an adversary adds Gaussian noise $\mathcal N(0,\sigma^2)$ to each of its model parameters before sending them to its neighboring agents.
\begin{figure}
    \centering
    \includegraphics[width=0.7\linewidth]{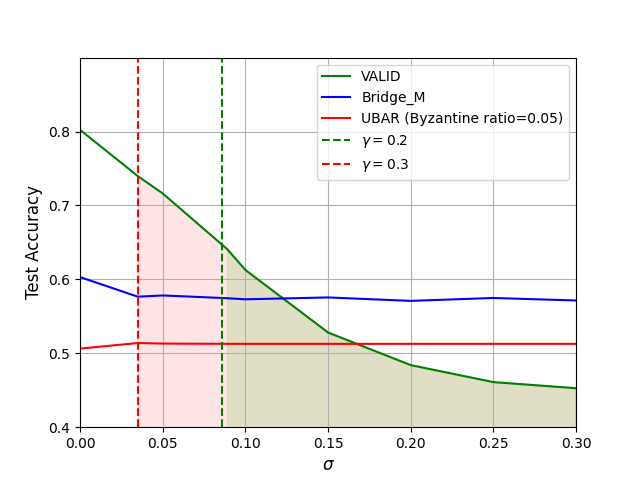}
    \caption{Performance of \protocol, UBAR, and Bridge-M under Gaussian attack (1 adversarial agent, non-i.i.d. setting).}
    \label{fig:noniid_attack}
\end{figure}
While \protocol demonstrates good performance in non-i.i.d setting, it is not a Byzantine-resilient scheme and its performance degrades in the presence of an adversary as the variance of the noise $\sigma$ increases. In contrast, UBAR and Bridge-M are Byzantine-resilient schemes and their performances remain almost the same in the presence of an adversary with different values of $\sigma$ as depicted in Fig. \ref{fig:noniid_attack}. However, \protocol is able to detect the attack using the heterogeneity and optimality checks presented in section \ref{sec:globalvalidation}. As depicted in Fig. \ref{fig:noniid_attack}, the sensitivity of \protocol to detect the attack increases with $\gamma$. Therefore, it can detect weak attacks with the appropriate choice of $\gamma$.
We experimentally select $\gamma\leq\gamma_{max}$, where $\gamma_{\text{max}}$ is the largest value that can successfully pass the heterogeneity check without triggering a false alarm in the non-i.i.d. setting with no attack.

\bibliographystyle{IEEEtran}
\bibliography{Decentralized_Learning}
\clearpage\pagebreak\clearpage
\pagebreak
\clearpage
\appendix
\subsection{Algorithmic description of\protocol}
\begin{algorithm}[h]
    \SetKwInOut{KwIn}{Input}
    \SetKwInOut{KwOut}{Output}
  \KwIn{Number of rounds $T$\\ Step sizes $\{\alpha^{(1:T)},\eta^{(1:T)}\}$\\ Data $\tD=(\mD_{v}^{(1:T)}:v\in\cV)$}
  \KwOut{Final models $\hat{\vecx}_\cV$\\ 
  Final validation states $S_\cV$}
 	$({\vecx}_\cE^{(1:T)},{\vecg}_\cE^{(1:T)})\gets$\textsc{LearnModel($T,\{\alpha^{(1:T)},\eta^{(1:T)}\},\tD$)}\\
    $(S_\cV,\hat{\vecx}_\cV)\gets$\textsc{ValidateModel(${\vecx}_\cE^{(1:T)},{\vecg}_\cE^{(1:T)}$)}
\caption{The \protocol Protocol}\label{alg:valid}
\end{algorithm}
\subsubsection{Learning phase}
In the learning phase, agents perform distributed stochastic gradient for non-Byzantine networks (\emph{c.f.}~\cite{NedicO:09}). This phase of the protocol runs over $T-1$ rounds. In each round, the agents compute their model iterates and transmit the model iterate as well as the value of the computed gradient to all their neighbors. Fix a sequence $\{(\alpha^{(t)},\eta^{(t)})\}_{t\in[T-1]}$ that satisfies Theorem~\ref{thm:referencethm}. Let $\vecx_v^{(t)},\ \vecm_{vu}^{(t)}$, and $\vecg_{vu}^{(t)}$ be $d$-dimensional real vectors denoting agent $v$'s model iterate, the model sent by agent $v$ to agent $u$ and the scaled gradient sent by agent agent $v$ to agent $u$ in round $t$. Protocol~\ref{alg:dgd} details the algorithmic steps. 
\begin{algorithm}
    \SetKwInOut{KwIn}{Input}
    \SetKwInOut{KwOut}{Output}
  \KwIn{Number of rounds $T$\\ Step sizes $\{\alpha^{(1:T)},\eta^{(1:T)}\}$\\ Data $\tD=(\mD_{v}^{(1:T)}:v\in\cV)$}
  \KwOut{Messages $(\vecx_\cE^{(1:T)},\vecg_\cE^{(1:T)})$}
 	$\vecx_v^{(0)}\gets 0^d$ for each $v\in\cV$\\
 	\For{$t=0,2,\ldots,T-1$}{
   		\For{$v\in\cV$}{ 
    		$\vecy_v^{(t+1)}\gets \vecx_v^{(t)}+\eta^{(t+1)}\sum_{u\in\cN(v)}(\vecx_u^{(t)}-\vecx_v^{(t)})$\\
            $\vecg_{v}^{(t+1)}\gets \frac{1}{K}\sum_{k=1}^K \nabla_{\vecx}\ f(\vecy_v^{(t+1)},D_{v,k}^{(t+1)})$\\
    		$\vecx_v^{(t+1)} \gets  \vecy_v^{(t+1)}-\alpha^{(t+1)}\vecg_{v}^{(t+1)}$\\
    	}
    	\For{$u\in\cN(v)$}{
    			 $(\vecx_{vu}^{(t+1)},\vecg_{vu}^{(t+1)})\gets (\vecx_{v}^{(t+1)},\vecg_{v}^{(t+1)})$
   		} 
    	
    }
\caption{\textsc{LearnModel}}\label{alg:dgd}
\end{algorithm}

The detailed description of the protocol is given below.
\paragraph{Intialization:}  Set $\vecx_v^{(0)}=\vecm_{uv}^{(0)}=\vecg_{uv}^{(0)}=0^d$ for all $(u,v)\in\cE$. 

\paragraph{Rounds $1,2,\ldots,T-1$:} At the beginning of each round $t=1,2,\ldots,T-1$, each agent $v$ has access to its previous model iterate $\vecx_v^{(t-1)}$, the collection of messages $((\vecm_{uv}^{(t-1)},\vecg_{uv}^{(t-1)}): u\in\cN(v))$ received from all neighboring agents in round $t-1$, and its local data mini-batch $D_{v,1}^{(t)}, D_{v,2}^{(t)},\ldots,D_{v,K}^{(t)}$. In round $t$, agent $v$ first mixes its local model iterate with those from its neighbors to compute 
\begin{equation}
	\vecy_v^{(t)} =\vecx_v^{(t-1)}+\eta^{(t)}\sum_{u\in\cN(v)}(\vecx_u^{(t-1)}-\vecx_v^{(t-1)}).
\end{equation}
Next, agent $v$ produces the updated model iterate $\vecx_v^{(t)}$ by performing an iteration of stochastic gradient descent with $\vecy^{(t)}$ as the starting model iterate as follows
\begin{equation}
	\vecx_v^{(t)} = \vecy_v^{(t)}-\alpha^{(t)} \frac{1}{K}\sum_{k=1}^K \nabla_{\vecx}\ f(\vecy_v^{(t)},D_{v,k}^{(t)}).
\end{equation}
Lastly, agent $v$ transmits the message $(\vecm_{vu}^{(t)},\vecg_{vu}^{(t)})\triangleq (\vecx_v^{(t)},(\vecx_v^{(t)}-\vecy_v^{(t)})/\alpha^{(t)})$ to each neighboring agent $u\in\cN(v)$. Figure~\ref{fig:roundt} shows the variables involved in node $v$'s round $t$ computations.

\paragraph{Termination:} The learning phase terminates after round $T-1$. At this point, the final model estimate $\hat{\vecx}_v$ of each agent $v$ is set to be $\vecx_v^{(T-1)}$.

\subsubsection{Validation phase}
\begin{algorithm}
    \SetKwInOut{KwIn}{Input}
    \SetKwInOut{KwOut}{Output}
  \KwIn{Model iterates $\vecx_\cE^{(1:T)}$\\Gradient iterates $\vecg_\cE^{(1:T)}$}
  \KwOut{Validation states $S_\cV$\\ Final model vectors $\hat{\vecx}_{\cH_{\valid}}$}
 	$S_\cV^{(0)}\gets\valid^{|\cV|}$\\
 	$S_\cV^{(0)}\gets \textsc{LocalValidate($\vecx_\cE^{(1:T)},\vecg_\cE^{(1:T)}$)}$\\
    $S_\cV^{(0)}\gets \textsc{GlobalValidate($\vecx_\cE^{(1:T)},\vecg_\cE^{(1:T)},S_{\cV}^{(0)}$)}$\\
    $S_\cV\gets \textsc{ValidationStateAgreement($S_\cV^{(0)}$)}$\\
    \For{$v$ s.t. $S_v=\valid$}{$\hat{\vecx}_v\gets \vecx^{(T)}_{vu}$ for some $u\in\cN(v)$}
\caption{\textsc{ValidateModel}}\label{alg:validationphase}
\end{algorithm}
The Validation Phase (Protocol~\ref{alg:validationphase}) consists of two sub-phases -- local verification and global validation. The local verification sub-phase checks if each agents outputs are consistent with respect to its inputs. If this check is passed for all agents, we move on to the global validation sub-phase where it is verified if all agents' final models are $(\epsilon,\cH,\cP_{\delta})$-admissible. Before describing these sub-phases, we introduce two primitives that are repeatedly employed in our protocol.
\begin{figure}[t] 
\centering
  \resizebox{.5\textwidth}{!}{\begin{tikzpicture}[scale=1.5,>=stealth]
            \node[circle, draw, fill=gray!5!white, minimum size=0.5cm] (v) at (0,0) {$v$};
            \node[circle, draw=darkred, fill=darkred!5!white, minimum size=0.5cm] (a) at (2.3,-1) {$a$};
            \node[circle, draw=darkgreen, fill=darkgreen!5!white, minimum size=0.5cm] (b) at (-0.5,2) {$b$};
            \node[circle, draw=darkblue, fill=darkblue!5!white, minimum size=0.5cm] (c) at (-2.5,-1) {$c$};
           
 			\node[rectangle, rounded corners, draw=black, dashed, fill=white, minimum size=0.5cm,align=left] (vin) at (1.5,1) {$\vecX^{\mathrm{in}}_{va},\vecX^{\mathrm{in}}_{vb},\vecX^{\mathrm{in}}_{vc}$\\$\vecX^{\mathrm{in},\eta}_{va},\vecX^{\mathrm{in},\eta}_{vb},\vecX^{\mathrm{in},\eta}_{vc}$};
           
            \draw[->, dashed] (a) -- ++(0.3,-0.8);
            \draw[->, dashed] (b) -- ++(0.3,0.6);
            \draw[->, dashed] (c) -- ++(-0.3,-0.8);
    
            \draw[<-, dashed] (a) -- ++(0.8,-0.3);
            \draw[<-, dashed] (b) -- ++(-0.3,0.6);
            \draw[<-, dashed] (c) -- ++(-0.8,-0.3);
      
            
            \draw[->, black,dashed] (vin) to[bend left=15] node[midway, above, sloped, font=\small] {} (v);
            
            \draw[->, black] (v) to[bend left=15] node[midway, above, sloped, font=\small] {$\left(\vecM^{\mathrm{out}}_{va},\vecGamma_{va}\right)$} (a);
            
            \draw[->, darkred] (a) to[bend left=15] node[midway, below, sloped, font=\small] {$\vecX^{\mathrm{in}}_{av}$} (v);
            
            \draw[->, black] (v) to[bend left=15] node[midway, below, sloped, font=\small] {$\left(\vecX^{\mathrm{out}}_{vb},\vecGamma_{vb}\right)$} (b);
            \draw[->, darkgreen] (b) to[bend left=15] node[midway, above, sloped, font=\small] {$\vecX^{\mathrm{in}}_{bv}$} (v);

            \draw[->, black] (v) to[bend left=15] node[midway, below, sloped, font=\small] {$\left(\vecX^{\mathrm{out}}_{vc},\vecGamma_{vc}\right)$} (c);
            \draw[->, darkblue] (c) to[bend left=15] node[midway, above, sloped, font=\small] {$\vecX^{\mathrm{in}}_{cv}$} (v);

        \end{tikzpicture}}
        \caption{The above figure represents the variables  that participate in the computations undertaken by agent $v$ during the entire learning phase (in contrast to Figure~\ref{fig:roundt} that only represents the round-$t$ variables). }\label{fig:allrounds}
\end{figure}
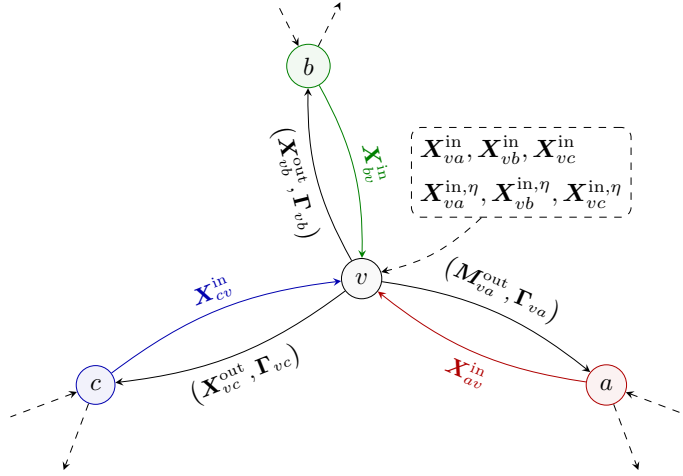

\paragraph{Validated Broadcast} The validated broadcast primitive is described in Protocol~\ref{alg:validatedbroadcast} and forms the fundamental building block for agents to broadcast messages to all agents with the guarantee that any errors in the broadcast are flagged as byzantine in nature.
\begin{algorithm}
\caption{\textsc{ValidatedBroadcast}}\label{alg:validatedbroadcast}
\KwIn {Source agent $v_0$; Message $m$}
\KwOut{Received messages $(m_v:v\in\cV)$, output validation states $(S_v^{(0)}:v\in\cV)$} 
$R\gets\min\{|\cV|,|\cE|\}$\\
$m_{v_0}\gets m$\\
$m_v\gets \varnothing$ for each $v\in\cV\setminus\{v_0\}$\\
\For{$r=1,2,\ldots,R$}
{\For{$v\in\cV$}{
Send $m_v$ to $\cN(v)$}
 \For{$v\in\cV$}{
\For{$u\in\cN(v)$}{
\uIf{$m_v=\varnothing$}{ $m_v\gets m_u$}\uElseIf{$m_u\notin\{\varnothing,m_v\}$}{$S_v^{(0)}\gets\invalid$}}
}
}
\end{algorithm}
\paragraph{Local Validation}

\begin{algorithm}
\caption{\textsc{LocalValidate}}\label{alg:localvalidation}
\KwIn {Transcript vectors $\vecX^{\mathrm{out}}_{uv}$, $\vecX^{\mathrm{in}}_{uv}$, $\vecX^{\mathrm{in},\eta}_{uv}$, $\vecGamma_{uv}$:  $(u,v)\in\cE$}
\KwResult{Validation states $(S_v:v\in\cV)$}
\tcc{Verify that the iterates are bounded} \For {$(u,v)\in\cE$}
{
    \If{$\ltwo{\vecx_{uv}^{(t)}}>B_t$ OR $\ltwo{(\vecg_{uv}^{(t)}}>C_t$} 
    {$S_{v}^{(0)}\gets\invalid$}
}
\For{$v\in\cV$}
{
	Pick $s_v\sim\textrm{Unif}(\bbF)$\\
	\tcc{Compute hashes using own key}
	\For{$u\in\cN(v)$}{
		$h_{u,v}^{v,\vecX_{\mathrm{out}}} \gets \textsc{Hash}(s_v,\vecX^{\mathrm{out}}_{uv})$\\
		$h_{u,v}^{v,\vecX_{\mathrm{in}}} \gets \textsc{Hash}(s_v,\vecX^{\mathrm{in}}_{uv})$\\
		$h_{u,v}^{v,\vecX_{\mathrm{in}},\eta} \gets \textsc{Hash}(s_v,\vecX^{\mathrm{in},\eta}_{uv})$\\
		$h_{u,v}^{v,\vecGamma} \gets \textsc{Hash}(s_v,\vecGamma_{uv})$
	}
}
\tcc{Broadcast hash values}
\For{$v\in\cV$, $u\in\cN(v)$}{
    $m_{u,v}^{(v)}\gets(h_{u,v}^{v,\vecX_{\mathrm{out}}},h_{u,v}^{v,\vecX_{\mathrm{in}}},h_{u,v}^{v,\vecX_{\mathrm{in}},\eta},h_{u,v}^{v,\vecGamma})$
	\textsc{ValidatedBroadcast}$\left(v,m_u^{(v)}\right)$
}
\tcc{Broadcast keys}
\For{$v\in\cV$}{
	\textsc{ValidatedBroadcast($v$,$s_v$)}
}
\tcc{Compute and broadcast hashes using other agents' keys}
\For{$v,v'\in\cV \mbox{ s.t. } v\neq v'$, $u\in\cN(v)$}{
	$h_{u,v}^{v',\vecX_{\mathrm{out}}} \gets \textsc{Hash}(s_{v'},\vecX^{\mathrm{out}}_{uv})$\\
	$h_{u,v}^{v',\vecX_{\mathrm{in}}} \gets \textsc{Hash}(s_{v'},\vecX^{\mathrm{in}}_{uv})$\\
	$\tilde{h}_{u,v}^{v,\vecX_{\mathrm{in}}} \gets \textsc{Hash}(s_{v'},\vecX^{\mathrm{in},\eta}_{uv})$\\	
	$h_{u,v}^{v',\vecGamma} \gets \textsc{Hash}(s_{v'},\vecX^{\mathrm{in}}_{uv})$\\
	$m_{u,v}^{(v')}\gets(h_{u,v}^{v',\vecX_{\mathrm{out}}},h_{u,v}^{v',\vecX_{\mathrm{in}}},h_{u,v}^{v',\vecX_{\mathrm{in}},\eta},h_{u,v}^{v',\vecGamma})$\\
    \textsc{ValidatedBroadcast}$\left(v,m_{u,v}^{(v')}\right)$
}
\For{$v'\in\cV$}{
\tcc{Consistency checks by $v'$}
    \For{$v\in\cV$, $u,u'\in\cN(v)$}
    {
        \uIf{$h_{v,u}^{v',\vecX_{\mathrm{out}}}\neq h_{v,u'}^{v',\vecX_{\mathrm{out}}}$}{$S_{v'}^{(0)}\gets \invalid$}\uElseIf{$h_{v,u}^{v',\vecX_{\mathrm{in}}}\neq h_{v,u'}^{v',\vecX_{\mathrm{in}}}$}{$S_{v'}^{(0)}\gets \invalid$}\uElseIf{$h_{v,u}^{v',\vecX_{\mathrm{in}},\eta}\neq h_{v,u'}^{v',\vecX_{\mathrm{in}},\eta}$}{$S_{v'}^{(0)}\gets \invalid$}\uElseIf{$h_{v,u}^{v',\vecGamma}\neq h_{v,u'}^{,\vecGamma}$}{$S_{v'}^{(0)}\gets \invalid$}\uElse{$(h_{v}^{v',\vecX_{\mathrm{out}}},h_{v}^{v',\vecX_{\mathrm{in}}},h_{v}^{v',\vecX_{\mathrm{in}},\eta},h_{v}^{v',\vecGamma})\gets m_{u,v}^{(v')}$}
    }
    \For{$v\in\cV$, $v''\in\cN(v)$}{
    	\If{$h_{v}^{v',\vecX_{\mathrm{out}}}\neq h_{v}^{v',\vecX_{\mathrm{in}}}+\sum_{v''\in\cN(v)}(h_{v''}^{v',\vecX_{\mathrm{in}},\eta}-h_{v}^{v',\vecX_{\mathrm{in}},\eta})+h_v^{v',\Gamma}$}{$S_{v'}^{(0)}\gets\invalid$}
    }
}
\end{algorithm}
Recall the following transcript vectors. 
\begin{align} 
 	&\vecX^{\mathrm{out}}_{uv} \triangleq [{\vecx_{uv}^{(t)}}^\top:t=1,2,\ldots,T]^\top\label{eq:Mout}\\
 	&\vecX^{\mathrm{in}}_{uv} \triangleq [{\vecx_{uv}^{(t)}}^\top:t=0,1,\ldots,T-1]^\top\label{eq:Min}\\
 	&\vecX^{\mathrm{in},\eta}_{uv} \triangleq [{\eta^{(t)}\vecx_{uv}^{(t)}}^\top:t=0,1,\ldots,T-1]^\top\label{eq:Mineta}\\
    &\vecGamma_{uv} =[{\alpha^{(t)}\vecg_{uv}^{(t)}}^\top:t=1,2,\ldots,T]^\top\label{eq:Gamma}
\end{align}

We note that if an agent $v$ performs all computations honestly, there exist $\overline{\vecX}_v^{\mathrm{out}},\overline{\vecX}_v^{\mathrm{in}},\overline{\vecX}_v^{\mathrm{in},\eta},\overline{\vecGamma}_v\in\bbR^{dT}$ such that the following relations are satisfied by the transcripts received by it and transmitted by it.
\begin{align}
	\vecX_{vu}^{\mathrm{out}}&= \overline{\vecX}_v^{\mathrm{out}} \ \forall\ u\in\cN(v),\\
	\vecX_{vu}^{\mathrm{in}}&= \overline{\vecX}_v^{\mathrm{in}} \ \forall\ u\in\cN(v),\\
	\vecX_{vu}^{\mathrm{in},\eta}&= \overline{\vecX}_v^{\mathrm{in},\eta} \ \forall\ u\in\cN(v),\\
	\vecGamma_{vu}&= \overline{\vecGamma}_v \ \forall\ u\in\cN(v),\mathrm{ and}\\
	\overline{\vecX}_{v}^{\mathrm{out}}&=\overline{\vecX}_{v}^{\mathrm{in}}+\sum_{u\in\cN(v)}(\vecX_{uv}^{\mathrm{in},\eta}-\vecM_{uv}^{\mathrm{in},\eta})-\overline{\vecGamma}_{v}
\end{align}
Thus, it is necessary that for every $s\in\bbF$, the following relations hold for some $H_v^{\mathrm{out}},H_v^{\mathrm{in}},H_v^{\mathrm{in},\eta},H_v^{\vecGamma}\in\bbR$.
	\begin{align}
	\textsc{Hash}(s,\vecX_{vu}^{\mathrm{out}})&= H_v^{\mathrm{out}}\ \forall\ u\in\cN(v),\label{eq:cc1}\\
	\textsc{Hash}(s,\vecX_{vu}^{\mathrm{in}})&=H_v^{\mathrm{in}}\ \forall\ u\in\cN(v),\label{eq:cc2}\\
	\textsc{Hash}(s,\vecX_{vu}^{\mathrm{in},\eta})&=H_v^{\mathrm{in},\eta}\ \forall\ u\in\cN(v),\label{eq:cc2eta}\\
	\textsc{Hash}(s,\vecGamma_{va})&= H_v^{\mathrm{in},\vecGamma}\ \forall\ u\in\cN(v),\ \mathrm{ and}\label{eq:cc3}\\
	H_v^{\mathrm{out}}&=H_v^{\mathrm{in}}+\sum_{u\in\cN(v)}(H_u^{\mathrm{in},\eta}-H_v^{\mathrm{in},\eta})-H_v^{\vecGamma}\label{eq:cc4}
	\end{align}
	Note that the last equality holds since for every $s\in\bbF$, $h(s,\cdot)$ is a linear map. We say that node $v$ has consistent hashes with respect to the key $s$ if~\eqref{eq:cc1}-~\eqref{eq:cc4} are satisfied.

\paragraph{Hash computation and broadcast}
\begin{enumerate}
	\item {\bf Random private key:} First each node $v$  draws its private key $s_v\sim \mathrm{Unif}(\bbF)$ 
	\item {\bf Local hash computation:} 
Every node $v$ performs the following hash computations. Each of the vectors defined in~\eqref{eq:Min}-~\eqref{eq:Gamma} are hashed using the polynomial hash with key $s_v$ to compute the hashes $h_{u,v}^{v,\vecM_{\mathrm{in}}}$, $h_{u,v}^{v,\vecM_{\mathrm{out}}}$, and $h_{u,v}^{v,\vecGamma}$ of  $\vecM^{\mathrm{in}}_{uv}$, $\vecM^{\mathrm{out}}_{uv}$, and $\vecM^{\mathrm{in}}_{uv}$ respectively, for each $u\in\cN(v)$.
\item {\bf Local Hash broadcast}: Each node $v$ broadcasts all the local hashes computed so far to all the nodes in the network using the { validated broadcast sub-protocol}.
\item {\bf Private key broadcast}: After receiving every other node's local hashes, node $v$ broadcasts $s_v$ to all the nodes in the network using the { validated broadcast sub-protocol}.
\item {\bf Cross-hash computation and broadcast}: Each node $v$ now computes a hash of its local vectors defined in ~\eqref{eq:Min}-~\eqref{eq:Gamma} by using the key $s_{v'}$ for each $v'\neq v$ and broadcasts all the results to all the nodes in the network using the { validated broadcast protocol}.
 \end{enumerate}

\paragraph{Local consistency checks:} Each node $v\in\cV$ verifies if each nodes $v'\in\cN(v)$ has consistent hashes with respect to the key $K_v$. If there exists a $v'$ that does not have consistent hashes, then node $v$ declares that there is an adversary and broadcasts this using the validated broadcast protocol.

\subsubsection{Validation State Agreement} See Protocol~\ref{alg:validateconsensus} for details of the algorithm.
\begin{algorithm}
\caption{\textsc{ValidationStateAgreement}}\label{alg:validateconsensus}
\KwIn {Graph $\cG=(\cV,\cE)$; Initial validation states $(S_v^{(0)}:v\in\cV)$}
\KwResult{Final validation states $(S_v:v\in\cV)$}
Set $R=|\cE|$\\
\For{$r=1,2,\ldots,R$}
{\For{$v\in\cV$}{
Send $S_v^{(r-1)}$ to $N(v)$}
 \For{$v\in\cV$}{
\If{$\exists u\in N(v)$ s.t. $S_u^{(r-1)}=\invalid$}{
Set $S_v^{(r)}=\invalid$}
}
}
Set $(S_v:v\in\cV)=(S_v^{(R)}:v\in\cV)$
\end{algorithm}

\subsection{Proof of Theorem~\ref{thm:protocol}}\label{sec:analysis}
In the following, we adopt a matrix notation to compactly represent the collection of vectors across all agents. For example, $\vecX^{(t)}=[\vecx_v^{(t)}:v\in\cV]$ is the $d\times |\cV|$ matrix formed by stacking together the model vectors of all agents as columns. 
\subsubsection{No byzantine agents}
When there are no byzantine nodes in the network at iteration $t$, the model states evolve according to the following equation.
\begin{align}
	&\mX^{(1)} = \mathbf{0}\\
	&\mX^{(t+1)} =\mX^{(t)} \mW(\eta^{(t)}) -\alpha^{(t)} \nabla F(\mX^{(t)}\mW(\eta^{(t)}),\tD^{(t)}). \label{eq:noadvev}
\end{align}

The following statement gives a bound on the rate of convergence of this evolution.
\begin{theorem}[\cite{JakovBSK:18,SahuJBK:18}]\label{thm:referencethm}
Let $\alpha^{(t)}=\alpha_0/(t+1)$ and $\eta^{(t)}=\eta_0/(t+1)^{1/2}$. Then, we have, 
\begin{equation}
	\mathbb{E}\ltwo{\vx_v^{(t)}-\vx^*}=O(\delta/t)+O(1/t).
\end{equation}	
\end{theorem}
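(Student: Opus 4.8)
The plan is to run the classical two-time-scale analysis of decentralized stochastic gradient descent, decoupling the evolution of the network average from the disagreement between agents. Write $\bar{\vecx}^{(t)}=\frac{1}{|\cV|}\mX^{(t)}\vone$ for the average model and $\mZ^{(t)}=\mX^{(t)}\bigl(\mI-\tfrac{1}{|\cV|}\vone\vone^{\top}\bigr)$ for the disagreement matrix, and track $R_t=\E\|\mZ^{(t)}\|_{\mathrm F}^{2}=\sum_{v\in\cV}\E\ltwo{\vecx_v^{(t)}-\bar{\vecx}^{(t)}}^{2}$ and $a_t=\E\ltwo{\bar{\vecx}^{(t)}-\vecx^{*}}^{2}$. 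The mixing operator in~\eqref{eq:noadvev} is $\mW(\eta)=\mI-\eta\mL$ with $\mL$ the graph Laplacian, so $\mW(\eta)\vone=\vone$; hence averaging~\eqref{eq:noadvev} erases the mixing term and leaves $\bar{\vecx}^{(t+1)}=\bar{\vecx}^{(t)}-\alpha^{(t)}\tfrac{1}{|\cV|}\sum_{v}\nabla_{\vecx}f(\vecy_v^{(t)},D_v^{(t)})$, an SGD step whose gradients are taken at the mixed iterates $\vecy_v^{(t)}$ rather than at $\bar{\vecx}^{(t)}$, while projecting~\eqref{eq:noadvev} onto the complement of $\vone$ gives $\mZ^{(t+1)}=\mZ^{(t)}\mW(\eta^{(t)})-\alpha^{(t)}\nabla F(\cdot)\bigl(\mI-\tfrac{1}{|\cV|}\vone\vone^{\top}\bigr)$. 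Since $\cG$ is connected (Assumption~\ref{asm:source} with $\cB=\phi$), $\lambda_2(\mL)>0$, and for $t$ large enough that $\eta^{(t)}\lambda_{\max}(\mL)<1$ the mixing step contracts the disagreement subspace by $1-\eta^{(t)}\lambda_2(\mL)$. The stochastic-gradient errors $\nabla_{\vecx}f(\vecy_v^{(t)},D_v^{(t)})-\E_{D\sim P_v}\nabla_{\vecx}f(\vecy_v^{(t)},D)$ form a martingale difference sequence whose conditional second moment is bounded via Assumption~\ref{asm:boundedgradientvariance}, so their cross terms will drop out in expectation.

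First I would handle the disagreement. Expanding $\|\mZ^{(t+1)}\|_{\mathrm F}^{2}$, using the contraction of $\mW(\eta^{(t)})$ and a Young inequality with free parameter of order $\eta^{(t)}$, I expect a recursion
\[
R_{t+1}\;\le\;\bigl(1-c\,\eta^{(t)}\bigr)R_t+\frac{C(\alpha^{(t)})^{2}}{\eta^{(t)}}\,G_t ,
\]
where $G_t$ upper bounds $\sum_{v}\E\ltwo{\nabla_{\vecx}f(\vecy_v^{(t)},D_v^{(t)})}^{2}$. Centering the gradients at $\vecx^{*}$ by $\beta$-smoothness (Assumption~\ref{asm:smooth}) and then invoking the $\delta$-heterogeneity bound $\frac{1}{|\cV|}\sum_v\ltwo{\E_{D\sim P_v}\nabla_{\vecx}f(\vecx^{*},D)}^{2}\le\delta$ together with the variance bound $\sigma_g^{2}$ yields $G_t=O\bigl(|\cV|(\delta+\sigma_g^{2})\bigr)+O\bigl(\beta^{2}R_t\bigr)+O\bigl(\beta^{2}|\cV|\,a_t\bigr)$. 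With $\alpha^{(t)}=\alpha_0/(t+1)$ and $\eta^{(t)}=\eta_0/\sqrt{t+1}$ one has $(\alpha^{(t)})^{2}/\eta^{(t)}=\Theta(t^{-3/2})$, so for large $t$ the $O(\beta^{2}R_t)$ part of $G_t$ is absorbed into the contraction; a quasi-static estimate of this recursion (forcing $\asymp t^{-3/2}$ against contraction $\asymp t^{-1/2}$) then gives
\[
R_t\;=\;O\!\left(\frac{\delta+\sigma_g^{2}+\sup_{s\le t}a_s}{t}\right).
\]

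Next I would treat the averaged iterate. From the averaged update, $\mu$-strong convexity (Assumption~\ref{asm:strongcvx}) and $\beta$-smoothness give a one-step bound of the familiar form
\[
a_{t+1}\;\le\;\bigl(1-\mu\alpha^{(t)}\bigr)a_t+(\alpha^{(t)})^{2}\frac{\sigma_g^{2}}{|\cV|}+\alpha^{(t)}\,\frac{C\beta^{2}}{\mu|\cV|}\,R_t ,
\]
the last term accounting for the bias $\frac{1}{|\cV|}\sum_v\bigl(\nabla_{\vecx}f(\vecy_v^{(t)},\cdot)-\nabla_{\vecx}f(\bar{\vecx}^{(t)},\cdot)\bigr)$, whose norm is $O\bigl(\beta\sqrt{R_t/|\cV|}\bigr)$ by smoothness. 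Substituting the bound on $R_t$ makes this forcing $O\bigl((\delta+\sigma_g^{2}+\sup_{s\le t}a_s)/t^{2}\bigr)$; choosing $\alpha_0$ so that $\mu\alpha_0$ exceeds the usual threshold for strongly convex SGD, the $a_t$-proportional piece is absorbed and the standard recursion lemma for $a_{t+1}\le(1-p/t)a_t+O((\delta+\sigma_g^{2})/t^{2})$ delivers $a_t=O(\delta/t)+O(1/t)$. Reinserting this into the estimate for $R_t$ shows $R_t=O(\delta/t)+O(1/t)$ as well, and then $\E\ltwo{\vecx_v^{(t)}-\vecx^{*}}^{2}\le 2\E\ltwo{\vecx_v^{(t)}-\bar{\vecx}^{(t)}}^{2}+2a_t\le 2R_t+2a_t=O(\delta/t)+O(1/t)$, which is the asserted bound (read in mean square, consistent with the squared quantities in Theorem~\ref{thm:protocol}).

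The hard part is the mutual coupling: $R_t$ is forced by $a_t$ and $a_t$ by $R_t$, so neither rate is available in isolation. The clean way around this is a bootstrap --- first establish, using only strong convexity and the $O$-bounded conditional second moments, a crude finite bound $\sup_t a_t<\infty$ (hence $\sup_t R_t<\infty$), then feed these constants back into both recursions to sharpen them to the rates above; equivalently one can work with the single Lyapunov function $a_t+\kappa R_t$ and exploit that the disagreement contracts on the faster scale $\eta^{(t)}\gg\alpha^{(t)}$. A second, more routine technical point is to make rigorous the quasi-static estimate $R_t\asymp(\alpha^{(t)})^{2}G_t/(\eta^{(t)})^{2}$ when the contraction coefficient $1-c\eta^{(t)}$ itself drifts up to $1$; this needs a recursion lemma adapted to step sizes summable in neither $\eta^{(t)}$ nor $(\eta^{(t)})^{2}$, which is exactly the ingredient provided by~\cite{JakovBSK:18,SahuJBK:18}.
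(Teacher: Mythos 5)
The paper offers no proof of Theorem~\ref{thm:referencethm} at all---it is imported verbatim from \cite{JakovBSK:18,SahuJBK:18}---and your sketch is essentially the same two-time-scale argument those references use: the average/disagreement decomposition with consensus contracting on the faster scale $\eta^{(t)}\sim t^{-1/2}$, coupled recursions for $a_t$ and $R_t$ closed by a bootstrap/Lyapunov step, and a recursion lemma for non-summable step sizes. Your reading of the (loosely stated) bound in mean square, with $\delta$ entering through the heterogeneity term in the gradient second moments, is the sensible interpretation and is consistent with how the bound is used in Theorem~\ref{thm:protocol}.
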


Further, when there are no Byzantine agents, all the validation checks are satisfied with probability $1-\exp(O(-KT))$. The error event here corresponds to the some agents' observed data being anomalous. The bound on the error probability follows from standard concentration arguments and is skipped here in the interest of space.

\subsubsection{Byzantine agents present} In this case, we first note that if any validation checks are failed then, the algorithm terminates successfully with honest agents declaring adversarial presence. Thus, we only need to show that when adversarial presence is not detected, the honest agents output an admissible consensus model. To this end, we first show in  Lemma~\ref{lem:validated} that the agents' model parameters are within $O(1/T)$ of a consensus model as long as both local and global validation checks are passed.

We first present a result that establishes a bound on the non-Byzantine iterates as aa function of the given parameters. \begin{lemma}[Bounded iterates]\label{lem:bounded} There exist a sequence $\{B_t,C_t\}_{t=1}^\infty$  of positive finite real numbers depending on depending on $t$, $L$, $\beta$, $\delta$, and $|\cV$ such that when $\cB=\phi$, 
\begin{align}
\ltwo{\vecx_v^{(t)}}\leq B_t,\mbox{ and}
\ltwo{\nabla f(\vecx_v^{(t)},D^{(t)})}\leq C_t
\end{align}
for all $v\in\cV,t=1,2,\ldots,T$.
\end{lemma}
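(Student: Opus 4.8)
The plan is to establish the bounds $\{B_t, C_t\}$ by induction on $t$, exploiting the fact that when $\cB = \phi$ the model iterates evolve according to the deterministic-looking recursion \eqref{eq:noadvev} with a bounded stochastic perturbation coming from the mini-batch gradients. First I would rewrite the update for a single agent as $\vecx_v^{(t)} = \vecy_v^{(t)} - \alpha^{(t)} \vecg_v^{(t)}$ where $\vecy_v^{(t)}$ is the mixing step and $\vecg_v^{(t)} = \frac{1}{K}\sum_k \nabla_\vecx f(\vecy_v^{(t)}, D_{v,k}^{(t)})$. Because the mixing weights $(1 - |\cN(v)|\eta^{(t)})$ and $\eta^{(t)}$ form a (sub)stochastic combination for $t$ large enough (since $\eta^{(t)} = \eta_0/(t+1)^{1/2} \to 0$), we get $\ltwo{\vecy_v^{(t)}} \le \max_{u} \ltwo{\vecx_u^{(t-1)}}$ plus a lower-order correction, or more crudely $\ltwo{\vecy_v^{(t)}} \le (1 + 2|\cV|\eta^{(t)}) \max_u \ltwo{\vecx_u^{(t-1)}}$.

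The key step is to control $\ltwo{\vecg_v^{(t)}}$ in terms of $\ltwo{\vecy_v^{(t)}}$. Using $\beta$-smoothness (Assumption~\ref{asm:smooth}), $\ltwo{\nabla_\vecx f(\vecy_v^{(t)}, D) - \nabla_\vecx f(\vzero, D)} \le \beta \ltwo{\vecy_v^{(t)}}$, and since $\sup_{D} \ltwo{\nabla_\vecx f(\vzero, D)}$ is finite — this follows from Assumption~\ref{asm:bounded} together with $\mu$-strong convexity and $\beta$-smoothness, which force $\ltwo{\nabla_\vecx f(\vzero,D)}$ to be uniformly bounded by some constant $L$ (indeed a convex finite function with Lipschitz gradient has bounded gradient at any fixed point, using the interplay of the upper and lower quadratic bounds) — we obtain $\ltwo{\vecg_v^{(t)}} \le L + \beta \ltwo{\vecy_v^{(t)}}$ deterministically, for every realization of the data. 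Hence $\ltwo{\vecx_v^{(t)}} \le \ltwo{\vecy_v^{(t)}} + \alpha^{(t)}(L + \beta\ltwo{\vecy_v^{(t)}}) = (1 + \alpha^{(t)}\beta)\ltwo{\vecy_v^{(t)}} + \alpha^{(t)} L$. Combining with the mixing bound gives a recursion of the form $R_t \le (1 + \alpha^{(t)}\beta)(1 + 2|\cV|\eta^{(t)}) R_{t-1} + \alpha^{(t)} L$ where $R_t \triangleq \max_v \ltwo{\vecx_v^{(t)}}$. Since $\alpha^{(t)} = \alpha_0/(t+1)$ and $\eta^{(t)} = \eta_0/(t+1)^{1/2}$, the product $\prod_{s \le t}(1 + \alpha^{(s)}\beta)(1 + 2|\cV|\eta^{(s)})$ grows at most like $\exp(O(\sqrt{t}))$, so $R_t \le B_t$ for an explicit finite $B_t$ depending only on $t, L, \beta, \delta, |\cV|$ (the $\delta$-dependence enters only through the fact that at $t=1$ we can also sharpen using $\vecx^*$; in the crude bound it need not appear, but we keep it in the statement for compatibility). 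Then set $C_t \triangleq L + \beta B_t$ to bound $\ltwo{\nabla f(\vecx_v^{(t)}, D^{(t)})}$, again via $\beta$-smoothness from the origin.

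The main obstacle I anticipate is the very first claim — that $L \triangleq \sup_D \ltwo{\nabla_\vecx f(\vzero, D)} < \infty$ — since Assumption~\ref{asm:bounded} only gives pointwise-in-$\vecx$ boundedness of $f$ itself, not of its gradient uniformly over $D$. The resolution is to combine Assumptions~\ref{asm:smooth} and~\ref{asm:strongcvx}: for a fixed $D$, strong convexity gives $f(\vecx, D) \ge f(\vzero, D) + \langle \nabla_\vecx f(\vzero,D), \vecx\rangle + \tfrac{\mu}{2}\ltwo{\vecx}^2$, and $\beta$-smoothness gives $f(\vecx, D) \le f(\vzero, D) + \langle \nabla_\vecx f(\vzero,D), \vecx\rangle + \tfrac{\beta}{2}\ltwo{\vecx}^2$; evaluating the lower bound at $\vecx = -t\,\nabla_\vecx f(\vzero,D)/\ltwo{\nabla_\vecx f(\vzero,D)}$ and using $\sup_D f(\vecx,D) < \infty$ from Assumption~\ref{asm:bounded} bounds $\ltwo{\nabla_\vecx f(\vzero,D)}$ in terms of $\sup_{\ltwo{\vecx} = 1} \sup_D f(\vecx,D)$, $f(\vzero, D) \ge 0$ (loss is $\R^+$-valued), and $\mu$. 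I would also remark that this bound is uniform over the data-generating distributions, so it holds under any honest-agent mini-batch, which is exactly what the subsequent local-validation argument needs. The remaining steps — the induction and the geometric-product estimate — are routine calculus and I would not belabor them.
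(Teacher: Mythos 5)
The paper itself gives no real proof of this lemma --- it simply defers to \cite[Lemma~6.1]{SahuJBK:18} and notes that smoothness, strong convexity, and finiteness of the loss are what is needed --- so your self-contained induction is a reasonable reconstruction rather than a restatement. Its skeleton is sound: the mixing step is a (near-)convex combination so $\ltwo{\vecy_v^{(t)}}\leq(1+2|\cV|\eta^{(t)})\max_u\ltwo{\vecx_u^{(t-1)}}$, smoothness gives $\ltwo{\vecg_v^{(t)}}\leq L+\beta\ltwo{\vecy_v^{(t)}}$ once $L\triangleq\sup_D\ltwo{\nabla_{\vecx}f(\vzero,D)}$ is known to be finite, and the resulting recursion $R_t\leq(1+\alpha^{(t)}\beta)(1+2|\cV|\eta^{(t)})R_{t-1}+\alpha^{(t)}L$ yields a finite $B_t$ for every fixed $t$ (the product is at most $\exp(O(\sqrt{t}))$), with $C_t=L+\beta B_t$; this is all the lemma asserts.

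The genuine flaw is in the step you yourself flagged as the crux, the proof that $L<\infty$. Plugging $\vecx=-t\,\nabla_{\vecx}f(\vzero,D)/\ltwo{\nabla_{\vecx}f(\vzero,D)}$ into the strong-convexity \emph{lower} bound gives $f(\vecx,D)\geq f(\vzero,D)-t\ltwo{\nabla_{\vecx}f(\vzero,D)}+\tfrac{\mu}{2}t^2$, which rearranges to $\ltwo{\nabla_{\vecx}f(\vzero,D)}\geq \tfrac{\mu t}{2}+\tfrac{f(\vzero,D)-f(\vecx,D)}{t}$ --- a \emph{lower} bound on the gradient norm, so as written the step fails. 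Two repairs are available. (i) Move in the $+$gradient direction: convexity gives $t\ltwo{\nabla_{\vecx}f(\vzero,D)}\leq f(t\vecu,D)-f(\vzero,D)\leq\sup_{\ltwo{\vecx}=t}\sup_{D'}f(\vecx,D')$ with $\vecu=\nabla_{\vecx}f(\vzero,D)/\ltwo{\nabla_{\vecx}f(\vzero,D)}$; the right-hand side is finite because $\vecx\mapsto\sup_{D}f(\vecx,D)$ is, by Assumption~\ref{asm:bounded} and convexity, a finite convex function on $\bbR^d$, hence continuous and bounded on the sphere (this justification for the sup over the unit sphere is needed and missing in your sketch, since Assumption~\ref{asm:bounded} is only pointwise in $\vecx$). (ii) Cleaner: use the smoothness upper bound together with nonnegativity of the loss, $0\leq f\bigl(\vzero-\tfrac{1}{\beta}\nabla_{\vecx}f(\vzero,D),D\bigr)\leq f(\vzero,D)-\tfrac{1}{2\beta}\ltwo{\nabla_{\vecx}f(\vzero,D)}^2$, so that $\ltwo{\nabla_{\vecx}f(\vzero,D)}\leq\sqrt{2\beta f(\vzero,D)}\leq\sqrt{2\beta\sup_{D'}f(\vzero,D')}$; this needs Assumption~\ref{asm:bounded} only at the single point $\vzero$ and dispenses with strong convexity and the sphere supremum altogether. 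With either repair your induction goes through and proves the lemma.
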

\begin{proof} The proof follows along the lines of~\cite[Lemma~6.1]{SahuJBK:18} by employing the smoothness, strong convexity, and finiteness of the loss function. We skip the proof here.
\end{proof}

\begin{lemma}[Validated models] \label{lem:validated} Let $(\hat{\vecx}_\cE,\hat{\vecg}_\cE)$ satisfy local as well as global validation checks. Then, there exists $\hat{\vecx}^*$ and $(\hat{\vecx}_u:u\in\cV)$ such that we have, with probability $1-O(1/|\bbF|^{|\cH|-1})$, 

\begin{align}
&\hat{\vecx}_{uv}=\hat{\vecx}_u,\mbox{ and}\label{eq:equality}\\
&\sum_{e\in\cE}\ltwo{\hat{\vecx}_{e}-\hat{\vecx}^*}^2=O(1/T)\label{eq:consensus}
\end{align}
\end{lemma}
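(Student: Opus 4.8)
The plan is to exploit the fact that the hashing machinery in \textsc{LocalValidate} forces, with high probability over the random keys, the \emph{actual} transcript vectors on all edges to obey the same linear identities that an honest agent's transcripts would satisfy. First I would argue that when the local validation checks pass, for each agent $v$ the four hashes $h_{v,u}^{v',\vecX_{\mathrm{out}}}$ (and the three analogous families) agree across all $u\in\cN(v)$ for \emph{every} key $s_{v'}$ broadcast by an honest $v'$; since there are $|\cH|$ honest keys and each is drawn uniformly from $\bbF$ independently of the transcript vectors (the two-stage broadcast---hashes first, keys second---guarantees the adversary cannot adapt its transcripts to the honest keys), the polynomial-hash collision bound gives that $\textsc{Hash}(s_{v'},\vecX^{\mathrm{out}}_{vu}-\vecX^{\mathrm{out}}_{vu'})=0$ for all honest $v'$ forces $\vecX^{\mathrm{out}}_{vu}=\vecX^{\mathrm{out}}_{vu'}$, except on an event of probability $O(1/|\bbF|^{|\cH|-1})$ after a union bound over the (constantly many) edges and coordinates. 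This yields well-defined per-agent vectors $\overline{\vecX}^{\mathrm{out}}_v,\overline{\vecX}^{\mathrm{in}}_v,\overline{\vecX}^{\mathrm{in},\eta}_v,\overline{\vecGamma}_v$, and in particular equation~\eqref{eq:equality} (reading off the last block of $\overline{\vecX}^{\mathrm{out}}_v$ gives $\hat{\vecx}_{uv}=\hat{\vecx}_u\triangleq \overline{\vecx}^{(T)}_v$). The consistency check \eqref{eq:cc4} on hashes, again valid for all honest keys, then upgrades to the vector identity $\overline{\vecX}^{\mathrm{out}}_v=\overline{\vecX}^{\mathrm{in}}_v+\sum_{u\in\cN(v)}(\overline{\vecX}^{\mathrm{in},\eta}_u-\overline{\vecX}^{\mathrm{in},\eta}_v)-\overline{\vecGamma}_v$ on the same high-probability event.

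Next I would read this identity coordinate-block by coordinate-block in $t$. It says precisely that the vectors $(\overline{\vecx}^{(t)}_v)$ evolve according to the honest \textsc{LearnModel} recursion~\eqref{eq:noadvev} driven by the ``gradient'' sequence $\overline{\vecg}^{(t)}_v\triangleq \overline{\vecGamma}_v[t]/\alpha^{(t)}$, with the only caveat that for Byzantine $v$ the quantity $\overline{\vecg}^{(t)}_v$ need not be an actual minibatch gradient. The point is that this is now a \emph{deterministic} perturbed consensus-gradient iteration of exactly the form analyzed in Theorem~\ref{thm:referencethm} / \cite{SahuJBK:18}, and the passing of the \emph{global} validation checks (optimality check $\ltwo{\frac1{|\cV|}\sum_u \hat{\vecg}^*_u}\le\epsilon$ and heterogeneity check $\frac1{|\cV|}(\sum_u\hat\ell^*_u)^2\le\delta+\epsilon$, together with the consistency of the broadcast final-gradient estimates $\hat{\vecg}_{uv}$) certifies that the effective gradient field $(\overline{\vecg}^{(t)}_v)$ is, up to $O(1/t)$ terms controlled via Lemma~\ref{lem:bounded} and the $\beta$-smoothness, close to an admissible gradient configuration: its running weighted averages nearly sum to zero and have bounded squared norm. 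Feeding this into the $\mu$-strong-convexity contraction argument that underlies Theorem~\ref{thm:referencethm} (the mixing matrix $\mW(\eta^{(t)})$ has the same spectral-gap properties since, by Assumption~\ref{asm:source}, $\cG_{\cB^c}$ is connected, so disagreement contracts geometrically modulo the $\eta^{(t)}=O(t^{-1/2})$ step) yields $\sum_{v}\ltwo{\overline{\vecx}^{(t)}_v-\hat{\vecx}^*}^2=O(1/T)$ at $t=T$ for the appropriate $\hat{\vecx}^*$ (a minimizer of the surrogate loss associated with the admissible configuration), which is~\eqref{eq:consensus}.

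I would then combine the two parts: on the event where all the polynomial hashes are non-degenerate (probability $1-O(1/|\bbF|^{|\cH|-1})$) the equalities~\eqref{eq:equality} hold exactly, the per-agent iterates solve the honest recursion, and the deterministic convergence estimate gives~\eqref{eq:consensus}; off that event we make no claim, which is exactly what the statement allows.

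\textbf{Main obstacle.} The delicate step is the passage from ``all honest-key hashes of the transcript \emph{differences} vanish'' to ``the transcript vectors themselves are equal,'' because the adversary chooses its transcripts \emph{before} seeing the honest keys but the collision event must be controlled uniformly over the (adversarially chosen, but fixed-once-chosen) difference vectors; the clean way is to condition on the transcripts, note each $\overline{\vecX}^{\,\cdot}_v$-difference is then a fixed vector in $\bbR^{d(T-2)}$, and apply the Schwartz--Zippel-type bound $\Pr[\textsc{Hash}(s,\xi)=0]\le d(T-2)/|\bbF|$ independently across the $|\cH|$ honest keys, so that the joint vanishing probability is $\le (d(T-2)/|\bbF|)^{|\cH|}$; a union bound over $O(|\cE|)$ edges and the four hash families absorbs into the stated $O(1/|\bbF|^{|\cH|-1})$ after choosing $|\bbF|$ large enough. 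The second, more quantitative obstacle is making precise that the global checks, which only see the $\gamma$-weighted \emph{final} gradient estimates $\hat{\vecg}_{uv},\hat{\ell}_{uv}$, actually pin down the whole trajectory's limiting gradient to within the tolerance needed for the $O(1/T)$ rate; this is where Lemma~\ref{lem:perturbation} (the smallness of $\gamma$) and Lemma~\ref{lem:bounded} enter, bounding the gap between $\hat{\vecg}_{uv}$ and $\overline{\vecg}^{(T-1)}_v$ and hence transferring the heterogeneity/optimality certificate from the estimates to the true effective gradients.
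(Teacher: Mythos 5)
Your proposal follows essentially the same route as the paper: the polynomial-hash collision bound over the honest keys (made possible by broadcasting hashes before keys) gives \eqref{eq:equality}, and the consensus rate \eqref{eq:consensus} is obtained from the boundedness certified by the local norm checks (Lemma~\ref{lem:bounded}), the admissibility certified by the global optimality/heterogeneity checks, and the geometric decay of Byzantine perturbations from Lemma~\ref{lem:perturbation}. If anything, your write-up is more explicit than the paper's terse sketch (the Schwartz--Zippel accounting over honest keys and the reading of the verified hash identity as the honest recursion driven by effective gradients), but the decomposition and the key lemmas invoked are the same.
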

\begin{proof}
The first condition follows from the local validation checks. Note that the probability (under the random choice of $(s_{v''}:v''\in \cH\setminus \{u\})$) that $\textsc{Hash}(s_{v''},\vecX_{uv}^{out})$ equals $\textsc{Hash}(s_v',\vecX_{uv'}^{out})$ even though $\vecX_{uv}^{out}\neq \vecX_{uv'}^{out}$ for some $v\neq v'$ is at most $1/|\bbF|^{|\cH|-1}$. Thus, as long there is no hash collision,~\eqref{eq:equality} is satisfied. Next, to see that~\eqref{eq:consensus} is satisfied, we first note that since the global validation check is satisfied, the final gradient estimate $\hat{\vecg}_u^*$ satisfies the condition $\ltwo{\hat{\vecg}_u^*}<|\cV|(\delta+\epsilon)$. Further, by Lemma~\ref{lem:bounded}, we know that the gradients are absolutely bounded since the Local Validation is satisfied. Applying Lemma~\ref{lem:perturbation}, we obtain that any bounded Byzantine perturbations decay exponentially, implying that the model parameters converge to a consensus value at the claimed rate.
\end{proof}

Now, suppose that any byzantine nodes are undetected at the beginning of iteration $t$. We model the presence of such failures as that of adding a perturbation $\mZ^{(t)}$ to the model update transmitted by honest nodes. Let $\hat{\mX}^{(t)}$ denote the model when undetected byzantine nodes are present. The model states evolve according to the equation 

\begin{align}
	&\hat{\mX}^{(1)} = \mathbf{0}\\
	& \tilde{\mX}^{(t)} = \hat{\mX}^{(t)}+\mZ^{(t)}\\
	&\hat{\mX}^{(t+1)} =\tilde{\mX}^{(t)} \mW -\alpha^{(t)} \hat{\mG}^{(t)},\label{eq:advev}
\end{align}
where, $\mZ^{(t)}= [\vecz^{(t)}_v:v\in\cV]$ with $\vecz^{(t)}_v = 0$ for $v\notin\cB$ and $\hat{\mG}^{(t)}=\mG(\tilde{\mX}^{(t)}\mW,\tD^{(t)})$.

The following lemma relates the model state evolutions in~\eqref{eq:noadvev} and~\eqref{eq:advev}
\begin{lemma}[Perturbation analysis]\label{lem:perturbation}
	Let $f$ satisfy Assumptions~\ref{asm:smooth} and~\ref{asm:strongcvx}. Let $\alpha^{(t)}\in(0, \max\{1,\mu/\beta^2\})$ for all $t$. Then, there exists $\gamma\in(0,1)$ s.t., for all $t$, we have 
	\begin{align}
 \lVert \hat{\mX}^{(t+1)}-\mX^{(t+1)}\rVert_F
& \leq   \sum_{\tau=0}^t \gamma^{t-\tau+1} \lVert \mZ^{(\tau)}\rVert_F.
 \end{align}
	\end{lemma}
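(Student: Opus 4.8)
The plan is to set up a one-step contraction for the "error" matrix $\mE^{(t)} \triangleq \hat{\mX}^{(t)} - \mX^{(t)}$ and then unroll the recursion. First I would subtract the two evolution equations~\eqref{eq:noadvev} and~\eqref{eq:advev}. Using $\tilde{\mX}^{(t)} = \hat{\mX}^{(t)} + \mZ^{(t)}$, the honest-update map applied to $\hat{\mX}^{(t)}$ differs from the one applied to $\mX^{(t)}$, so I would write
\begin{align}
\mE^{(t+1)} &= \bigl(\hat{\mX}^{(t)}+\mZ^{(t)}\bigr)\mW(\eta^{(t)}) - \mX^{(t)}\mW(\eta^{(t)}) \notag\\
&\quad -\alpha^{(t)}\Bigl(\nabla F\bigl((\hat{\mX}^{(t)}+\mZ^{(t)})\mW(\eta^{(t)}),\tD^{(t)}\bigr) - \nabla F\bigl(\mX^{(t)}\mW(\eta^{(t)}),\tD^{(t)}\bigr)\Bigr).
\end{align}
Writing $\mA^{(t)} \triangleq (\mE^{(t)}+\mZ^{(t)})\mW(\eta^{(t)})$, the right-hand side becomes $\mA^{(t)} - \alpha^{(t)}\bigl(\nabla F(\mX^{(t)}\mW + \mA^{(t)}) - \nabla F(\mX^{(t)}\mW)\bigr)$ (suppressing $\tD^{(t)}$ and $\eta^{(t)}$). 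The key analytic fact is that the map $\vx \mapsto \vx - \alpha\bigl(\nabla f(\vy+\vx,D) - \nabla f(\vy,D)\bigr)$ is a contraction in $\vx$ for suitable $\alpha$: by $\mu$-strong convexity and $\beta$-smoothness (Assumptions~\ref{asm:strongcvx},~\ref{asm:smooth}) the gradient difference operator has its "eigenvalues" in $[\mu,\beta]$ along the segment, so $\ltwo{\vx - \alpha(\nabla f(\vy+\vx)-\nabla f(\vy))} \le \max\{|1-\alpha\mu|,|1-\alpha\beta|\}\,\ltwo{\vx}$. Choosing $\alpha^{(t)} \in (0,\mu/\beta^2)$ (as in the hypothesis; note $\mu/\beta^2 \le 1/\beta$) makes this factor $\le 1-\alpha^{(t)}\mu < 1$. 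Applied columnwise and summed in Frobenius norm, this gives $\frob{\mE^{(t+1)}} \le \rho\,\frob{\mA^{(t)}}$ for some $\rho<1$ (uniform in $t$ if the $\alpha^{(t)}$ are bounded away from $0$, or one can absorb the worst $t$).

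Next I would handle the mixing step: $\mW(\eta^{(t)})$ is (row- or doubly-)stochastic — a convex-combination matrix with spectral radius $1$ and, crucially, operator norm $\le 1$ on the relevant subspace — so $\frob{(\mE^{(t)}+\mZ^{(t)})\mW(\eta^{(t)})} \le \frob{\mE^{(t)}+\mZ^{(t)}} \le \frob{\mE^{(t)}} + \frob{\mZ^{(t)}}$ by submultiplicativity and the triangle inequality. Combining, $\frob{\mE^{(t+1)}} \le \rho\bigl(\frob{\mE^{(t)}} + \frob{\mZ^{(t)}}\bigr)$ with $\mE^{(1)} = \mathbf{0}$ (both chains start at $\mathbf{0}$). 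Unrolling this linear recursion yields $\frob{\mE^{(t+1)}} \le \sum_{\tau=1}^{t}\rho^{\,t-\tau+1}\frob{\mZ^{(\tau)}}$, which, after renaming $\rho$ to $\gamma$ and noting $\mZ^{(0)}=\mathbf{0}$ so the sum may start at $\tau=0$, is exactly the claimed bound.

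The main obstacle is the contraction step: one must be careful that the gradient-difference map behaves well even though $\nabla f(\vy+\vx,D)-\nabla f(\vy,D)$ is not literally a symmetric linear operator. The clean way is the standard integral representation $\nabla f(\vy+\vx,D)-\nabla f(\vy,D) = \bigl(\int_0^1 \nabla^2 f(\vy+s\vx,D)\,ds\bigr)\vx$, whose bracketed matrix is symmetric with spectrum in $[\mu,\beta]$; then $\mI - \alpha\int_0^1\nabla^2 f\,ds$ has spectral norm $\le \max\{|1-\alpha\mu|,|1-\alpha\beta|\}$. A secondary subtlety is that the $d$ columns of $\mA^{(t)}$ correspond to different agents with different data $\tD^{(t)}$, so the contraction factor must be taken as the worst case over agents — which is still $\le 1-\alpha^{(t)}\mu$ uniformly. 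One should also confirm $\mW(\eta^{(t)})$ has the stated norm-nonexpansiveness for the chosen step sizes $\eta^{(t)}$ (this holds as long as $\eta^{(t)}$ is small enough that all entries stay nonnegative and rows sum to $1$, which is part of the protocol's design). Everything else is routine bookkeeping.
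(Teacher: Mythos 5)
Your proposal is correct and follows essentially the same route as the paper's proof: subtract the two evolution equations, establish a per-agent contraction from $\mu$-strong convexity and $\beta$-smoothness, use the Frobenius-norm non-expansiveness of the doubly stochastic $\mW$ together with the triangle inequality on $\mZ^{(t)}$, and unroll the resulting recursion. The only cosmetic difference is the contraction constant: the paper expands the squared norm directly to get the factor $\sqrt{1+(\alpha^{(t)})^2\beta^2-\alpha^{(t)}\mu}$ without ever invoking second derivatives, whereas you appeal to the Hessian integral representation to get $\max\{|1-\alpha^{(t)}\mu|,|1-\alpha^{(t)}\beta|\}$ — both yield the claimed geometric bound under the stated step-size restriction (and you even flag, slightly more carefully than the paper, the need for the per-step factors to stay uniformly below $1$).
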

\begin{proof}
From~\eqref{eq:noadvev} and~\eqref{eq:advev}, we have
\begin{align}
	\hat{\mX}^{(t+1)}-\mX^{(t+1)}& = \tilde{\mX}^{(t)} \mW -\alpha^{(t)} \hat{\mG}^{(t)} - \mX^{(t)} W +\alpha^{(t)} \mG^{(t)} \\
	& = (\tilde{\mX}^{(t)}  - \mX^{(t)}) \mW - \alpha^{(t)} (\hat{\mG}^{(t)}-\mG^{(t)}).
\end{align}
	Let $\mY=[\vy_v:v\in\cV]\triangleq \mX^{(t)} \mW $, $\tilde{\mY}=[\tilde{\vy}_v:v\in\cV]\triangleq \tilde{\mX}^{(t)} \mW $, and $\mS=[\vs_v:v\in\cV]\triangleq \tilde{\mY}-\mY$ for compactness. The above equation may be restated as	
	
\begin{align}
&\hat{\vx}_v^{(t+1)}	-	\vx_v^{(t+1)}\\& = \tilde{\vy}_v^{(t)}-\vy_v^{(t)}-\alpha^{(t)}(g(\tilde{\vy}_v^{(t)},\mD^{(t)})-g(\vy_v^{(t)},\mD^{(t)}))
\end{align}
for all $v\in\cV$. Thus, 
\begin{align}
	\lefteqn{\ltwo{\hat{\vx}_v^{(t+1)}-\vx_v^{(t+1)}}^2 }\\
	& = (\hat{\vx}_v^{(t+1)}-\vx_v^{(t+1)})^\top (\hat{\vx}_v^{(t+1)}-\vx_v^{(t+1)})\\
	& = \left(\tilde{\vy}_v^{(t)}-\vy_v^{(t)}-\alpha^{(t)}(g(\tilde{\vy}_v^{(t)},\mD^{(t)})-g(\vy_v^{(t)},\mD^{(t)}))\right)^\top   \\
 & \ \left(\tilde{\vy}_v^{(t)}-\vy_v^{(t)}-\alpha^{(t)}(g(\tilde{\vy}_v^{(t)},\mD^{(t)})-g(\vy_v^{(t)},\mD^{(t)}))\right)\\
	& = \ltwo{\tilde{\vy}_v-\vy_v}^2+ (\alpha^{(t)})^2 \ltwo{g(\tilde{\vy}_v^{(t)},\mD^{(t)})-g(\vy_v^{(t)},\mD^{(t)})}^2\\
 & \ -2\alpha^{(t)}(g(\tilde{\vy}_v^{(t)},\mD^{(t)})-g(\vy_v^{(t)},\mD^{(t)}))^\top(\tilde{\vy}_v-\vy_v).
\end{align}
Next, by Assumption~\ref{asm:smooth}, we have
\begin{align}
	\ltwo{g(\tilde{\vy}_v^{(t)},\mD^{(t)})-g(\vy_v^{(t)},\mD^{(t)})}^2
	& \leq \beta^2 \ltwo{\tilde{\vy_v}^{(t)}-\vy_v^{(t)}}^2.\\
\end{align}
	Further, by Assumption~\ref{asm:strongcvx}, we have, 
\begin{align}
	((g(\tilde{\vy}_v^{(t)},\mD^{(t)})-g(\vy_v^{(t)},\mD^{(t)}))^\top(\tilde{\vy}_v-\vy_v)
	&\geq \mu\ltwo{\tilde{\vy}_v^{(t)}-\vy_v^{(t)}}^2.
\end{align}
Thus, 
\begin{align*}
\ltwo{\hat{\vx}_v^{(t+1)}-\vx_v^{(t+1)}}^2 & \leq (1+(\alpha^{(t)})^2\beta^2-\alpha^{(t)}\mu) \ltwo{\tilde{\vy}_v^{(t)}-\vy_v^{(t)}}^2.	
\end{align*}
Now, {let $0<\alpha^{(t)}<\mu/\beta^2$} and let $\gamma^{(t)}=\sqrt{1+(\alpha^{(t)})^2\beta^2-\alpha^{(t)}\mu}$. Note that the choice of $\alpha^{(t)}$ ensures that there exists $\gamma<1$ s.t. $\gamma^{(t)}<\gamma$ for all $t$. Thus, we have, 

\begin{align}
	\lVert \hat{\mX}^{(t+1)}-\mX^{(t+1)}\rVert_F^2& = \sum_{v\in\cV } \ltwo{\hat{\vx}_v^{(t+1)}-\vx_v^{(t+1)}}^2\\
	& \leq (\gamma^{(t)})^2 \sum_{v\in\cV} \ltwo{\tilde{\vy}_v^{(t)}-\vy_v^{(t)}}^2\\
	& =  (\gamma^{(t)})^2 \lVert (\tilde{\mX}^{(t)}-\mX^{(t)})\mW\rVert_F^2\\
	& \leq (\gamma^{(t)})^2 \lVert \tilde{\mX}^{(t)}-\mX^{(t)}\rVert_F^2.
\end{align}
 Here, the last step follows from noting that $\mW$ is doubly stochastic and the magnitude of its eigenvalues are bounded by $1$. Therefore,
 
 \begin{align}
\lVert \hat{\mX}^{(t+1)}-\mX^{(t+1)}\rVert_F
& \leq   \gamma^{(t)} \lVert \tilde{\mX}^{(t)}-\mX^{(t)}\rVert_F\\
& \leq \gamma^{(t)}\lVert \hat{\mX}^{(t)}-\mX^{(t)}\rVert_F+\gamma^{(t)} \lVert \mZ^{(t)}\rVert_F.\label{eq:advpert}
 \end{align}
 Now, recursively applying~\eqref{eq:advpert}, we obtain 
 
 \begin{align}
\lVert \hat{\mX}^{(t+1)}-\mX^{(t+1)}\rVert_F
& \leq   \sum_{\tau=0}^t \left(\prod_{\sigma=\tau}^{t}\gamma^{(\sigma)} \right)\lVert \mZ^{(\tau)}\rVert_F.
 \end{align}
 In particular, we have, for some $\gamma\in(0,1)$, 
\begin{align}
 \lVert \hat{\mX}^{(t+1)}-\mX^{(t+1)}\rVert_F
& \leq   \sum_{\tau=0}^t \gamma^{t-\tau+1} \lVert \mZ^{(\tau)}\rVert_F.
 \end{align}

\end{proof}

\subsection{Proof of Theorem~\ref{thm:optimality}}
The proof follows from a ``benign attack'' that shows that no validated learning protocol with success probability approaching $1$ can attain loss values lower than those attained at admissible consensus vectors. The attack hinges on Byzantine agents picking a valid probability distribution (that is not necessarily the true distribution) and executing the protocol as honest agents would. See Protocol~\ref{alg:benignattack} for details.
\begin{algorithm}
    \SetKwInOut{KwIn}{Input}
    \SetKwInOut{KwOut}{Output}
  \KwIn{Agent set $\cV$; Validated Learning protocol $\Pi$ over $T$ rounds with mini-batch size $K$; Byzantine agents $\cB$; Byzantine data distribution $Q_{\cB}$; Honest agents' datasets $(\mD_{v}^{(t)}:v\in\cV\setminus\cB,t\in[T])$}
 	\For{$t=1,2,\ldots,T$}{
   		\For{$v\in\cV\setminus\cB$}{ 
    		Run round $t$ of $\Pi$ at node $v$ with data mini-batch $\mD_v^{(t)}$\tcp{This is run by honest agents}
    		}
    	\For{$v\in\cB$}{
    		Generate $\tilde{D}_{v,k}^{(t)}\sim Q_v, k=1,2,\ldots,K$\\
    		Run round $t$ of $\Pi$ at node $v$ with data mini-batch $\tilde{\mD}_v^{(t)}$\tcp{This is run by Byzantine agents}	
    	}
    	    	
    }
\caption{A Benign Attack}\label{alg:benignattack}
\end{algorithm}
\subsection{Graph model for experiment}\label{graph_model}
We consider an undirected graph with $n=20$ agents. The graph consists of two fully connected subgraphs of 10 agents each. These two subgraphs are connected with two edges chosen randomly. This ensures that there is a benign path between any two benign agents, provided there is at most one adversary. The graph is shown in Fig. \ref{fig:graph}

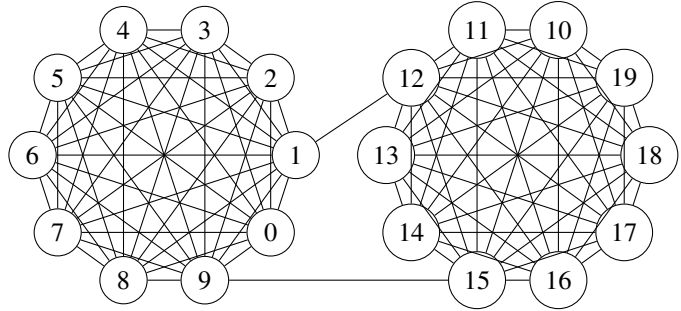
\begin{figure}[h]
    \centering
    \begin{tikzpicture}
    \foreach \x in {0,...,9} {
        \draw ({(\x-1)*36}:1.75cm) node[circle, draw] (\x) {\x};
    }
    \foreach \x in {0,...,9} {
        \foreach \y in {\x,...,9} {
            \ifnum\x=\y
            \else
                \draw (\x) -- (\y);
            \fi
        }
    }
    
    \begin{scope}[xshift=4.7cm]
        \foreach \x in {10,...,19} {
            \draw ({(\x-8)*36}:1.75cm) node[circle, draw] (\x) {\x};
        }
        \foreach \x in {10,...,19} {
            \foreach \y in {\x,...,19} {
                \ifnum\x=\y
                \else
                    \draw (\x) -- (\y);
                \fi
            }
        }
    \end{scope}

    \draw (1) -- (12);
    \draw (9) -- (15);
\end{tikzpicture}
    \caption{Graph of $n=20$ nodes used in the experiments.}
    \label{fig:graph}
\end{figure}
\begin{figure}[h]
    \centering
    \includegraphics[width=0.7\linewidth]{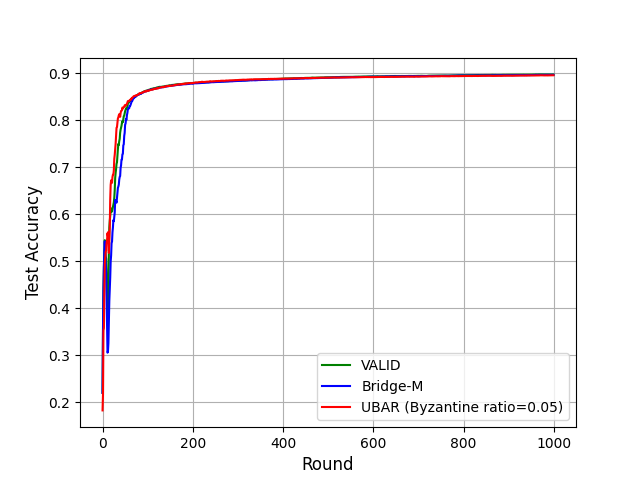}
    \caption{Performance of \protocol compared with UBAR and Bridge-M under i.i.d. setting.}
    \label{fig:iid}
\end{figure}\subsection{Additional experiments}\label{iid_result}
Fig. \ref{fig:iid} compares the performances of \protocol, UBAR, and Bridge-M under the i.i.d setting with no adversarial agents. The result shows approximately the same performance for all methods when there is no adversary.

\end{document}